\documentclass{article} % For LaTeX2e

%%%%% NEW MATH DEFINITIONS %%%%%

\usepackage{amsmath,amsfonts,bm}

% Mark sections of captions for referring to divisions of figures

% Highlight a newly defined term

% Figure reference, lower-case.

% Figure reference, capital. For start of sentence

% Section reference, lower-case.

% Section reference, capital.

% Reference to two sections.

% Reference to three sections.

% Reference to an equation, lower-case.
\def\eqref#1{equation~\ref{#1}}
% Reference to an equation, upper case

% A raw reference to an equation---avoid using if possible

% Reference to a chapter, lower-case.

% Reference to an equation, upper case.

% Reference to a range of chapters

% Reference to an algorithm, lower-case.

% Reference to an algorithm, upper case.

% Reference to a part, lower case

% Reference to a part, upper case

\def\1{\bm{1}}

% Random variables

% rm is already a command, just don't name any random variables m

% Random vectors

% Elements of random vectors

% Random matrices

% Elements of random matrices

% Vectors

% Elements of vectors

% Matrix

% Tensor
\DeclareMathAlphabet{\mathsfit}{\encodingdefault}{\sfdefault}{m}{sl}
\SetMathAlphabet{\mathsfit}{bold}{\encodingdefault}{\sfdefault}{bx}{n}

% Graph

% Sets

% Don't use a set called E, because this would be the same as our symbol
% for expectation.

% Entries of a matrix

% entries of a tensor
% Same font as tensor, without \bm wrapper

% The true underlying data generating distribution

% The empirical distribution defined by the training set

% The model distribution

% Stochastic autoencoder distributions

 % Laplace distribution

% Wolfram Mathworld says $L^2$ is for function spaces and $\ell^2$ is for vectors
% But then they seem to use $L^2$ for vectors throughout the site, and so does
% wikipedia.

 % See usage in notation.tex. Chosen to match Daphne's book.

\usepackage{natbib}

\usepackage{hyperref}
\usepackage{url}
\usepackage{subcaption}
\usepackage{graphicx}

\usepackage{multirow}

\usepackage{amsmath,amssymb,amsfonts}
\usepackage{amsthm}          
\usepackage{algpseudocode}

\newtheorem{lemma}{Lemma}

\newtheorem{proposition}{Proposition}

\usepackage{algorithm}

\usepackage{newfloat}
\usepackage{listings}

\usepackage{amsmath}
\usepackage{amsthm}
\usepackage[inline]{enumitem}
\usepackage{booktabs}
\usepackage{amsmath,amssymb}
\usepackage{tikz}
\usepackage{pgf}
\usetikzlibrary{positioning}
\usepackage{longtable}
\usepackage{bbm}

\usepackage{titlesec}
\titlespacing*{\section}{0pt}{2pt}{2pt}
\titlespacing*{\subsection}{0pt}{2pt}{2pt}
\titlespacing*{\subsubsection}{0pt}{2pt}{2pt}

\usepackage{amsmath,amssymb,amsthm}
% If 'proposition' is not defined by the template, uncomment the next line:

\theoremstyle{remark}

\usepackage{mathtools}

\usepackage{pgfplots}
\pgfplotsset{compat=1.17}
\usepackage{subcaption}

\usepackage{tikz}
\usetikzlibrary{arrows.meta,positioning,fit,calc,shapes.misc,backgrounds}
\usepackage{adjustbox}

\usepackage[font=small,labelfont=bf]{caption}
\captionsetup{skip=8pt,belowskip=7pt}
\setlength{\textfloatsep}{8pt plus 2pt minus 2pt}
\setlength{\floatsep}{6pt plus 2pt minus 2pt}
\setlength{\intextsep}{6pt plus 2pt minus 2pt}
\makeatletter

\makeatother
\raggedbottom

\usepackage{graphicx}
\usepackage{geometry}

\geometry{
    left=3cm,
    right=3cm,
    top=3cm,
    bottom=3cm
}

\usepackage{booktabs}

\usepackage{siunitx}
\sisetup{
  detect-weight=true,
  detect-inline-weight=math,
  separate-uncertainty=true,   % 显示为 “mean ± std”
  table-number-alignment=center
}

\title{\textbf{Diffusion Denoised and Physics-regularized Inter-series Model for Long-horizon Multivariate Time-series Forecasting}}

\begin{document}

\date{}
\maketitle

\textbf{Authors:}\\
Hongwei Ma$^{a}$, Jiayu Fang$^{a}$, Dai Shi$^{b}$, Minh{-}Ngoc Tran$^{a}$, Junbin Gao$^{a,*}$

\vspace{0.9em}
% Affiliations 区
\textbf{Affiliations:}\\
$^{a}$ The University of Sydney, City Rd, Darlington, NSW 2006, Sydney, Australia\\
$^{b}$ The University of Cambridge, Trumpington Street, Cambridge CB2 1PZ, Cambridge, UK

\begin{abstract}
\textbf{Problem and application.} Long-horizon multivariate time-series forecasting (LTSF) underpins engineering decision-making in electricity demand, road traffic, weather sensing, influenza monitoring, and exchange-rate analytics. Accurate forecasts should \emph{simultaneously} denoise heterogeneous sensor streams, track time-varying cross-series couplings, and remain stable and physically plausible over long rollouts.

\textbf{AI contribution.} We present \textbf{PRISM} (a denoised and \textbf{P}hysics-\textbf{R}egularized \textbf{I}nter-\textbf{S}eries \textbf{M}odel) that \emph{causally} links three components into an organic whole:  a \emph{score-based diffusion preconditioner} raises the effective signal-to-noise ratio of histories; on the cleaned signals, a \emph{correlation-thresholded dynamic graph} encodes evolving inter-series dependence with sparsity for interpretability; and a \emph{reaction--diffusion regularizer} in the forecast head provides physics-\emph{inspired} (structure-aware) stabilization of multi-step rollouts. This chain implements the hypothesis \emph{denoise $\Rightarrow$ estimate reliable sparse dynamics $\Rightarrow$ stabilize}, which we motivate theoretically and validate empirically.

\textbf{Results and evidence.} Under standardized tuning budgets and reporting of wall-clock cost, PRISM yields consistent improvements in MSE/MAE across six established benchmarks (Electricity, Traffic, Weather, ILI, Exchange Rate, ETT). Frequency-domain diagnostics indicate preserved fundamentals with attenuated spurious high-frequency bursts, and ablations attribute gains to (i) denoise-aware topology, (ii) adaptivity and sparsity of the graph, and (iii) reaction--diffusion stabilization with simple kinematic range priors. We do not introduce new physical laws; constraints are domain-agnostic and serve as \emph{regularizers}. Together, these results suggest that uncertainty-aware denoising, dynamic relational reasoning, and physics-inspired stabilization are \emph{complementary and necessary} for reliable long-horizon engineering forecasts.
\end{abstract}

\noindent\textbf{Keywords:} Time-series forecasting; Diffusion Denoising; Dynamic Correlation Graphs; Physics Regularization; Reaction-diffusion Stabilization.

\section{Introduction}
\label{sec:intro}

\paragraph{Background and challenge.}
Long-horizon multivariate time-series forecasting (LTSF) is central to many engineering systems, from power grids and transportation networks to meteorological and public-health monitoring. Forecasts must meet three coupled requirements: (i) \emph{denoise} and robustly encode local/meso-scale patterns under domain-specific disturbances; (ii) \emph{capture evolving cross-series interactions} that are often sparse and regime-dependent; and (iii) \emph{respect generic physical regularities} so that multi-step rollouts remain plausible and interpretable beyond the training distribution.

\paragraph{Gaps in existing paradigms.}
Deep learning backbones---CNNs~\citep{32,33,34}, RNNs~\citep{35,36,37}, Transformers~\citep{1}, and MLPs~\citep{38,39,40,41}---have advanced sequence modeling, yet on LTSF benchmarks the raw self-attention stack can be brittle at long horizons and under shifts~\citep{1,2,3,4,5,6,7,8}. In parallel, graph neural networks (GNNs) encode relational inductive biases for sensor arrays and multivariate channels, but many methods assume static or weakly adaptive graphs and rarely combine \emph{uncertainty-aware denoising} with \emph{interpretable constraints}~\citep{13,14,15,16}. Diffusion models provide powerful denoising priors, especially with partial observation or low SNR, yet they are seldom \emph{tightly integrated} with forecasting architectures and physics-oriented regularization in a single end-to-end pipeline~\citep{17,18,19}. These gaps motivate our design.

\paragraph{Design hypothesis: a causal chain.}
We posit that three inexpensive priors should be \emph{composed in order}:
\begin{enumerate}
\item \textbf{Diffusion preconditioning} increases effective SNR in the input window, shrinking spurious high-frequency bursts that otherwise inflate false correlations.
\item \textbf{Correlation-thresholded dynamic graphs} estimated on the cleaned signals then track regime-dependent couplings with explicit sparsity, improving interpretability and reducing error propagation across weakly related series.
\item \textbf{Reaction--diffusion stabilization} in the forecast head penalizes spatially incoherent activations (diffusion term) while allowing task-driven adjustments (reaction term), damping multi-step drift without hard-coding domain-specific PDEs.
\end{enumerate}
The chain implements \emph{denoise $\Rightarrow$ estimate dynamics $\Rightarrow$ stabilize}. Breaking the order re-introduces specific pathologies: without 1, the graph misconnects under noise; without 2, stale edges hinder regime shifts; without 3, rollouts amplify residual bias. Our theoretical analysis provides contraction of the induced horizon dynamics under mild conditions and Lipschitz bounds for the graph blocks, clarifying why the composition improves robustness.

\paragraph{The PRISM framework.}
PRISM operationalizes this hypothesis by (a) applying score-based diffusion to the history \emph{before} feature extraction; (b) constructing a sliding-window, correlation-thresholded, function-linked graph with bidirectional spatio--temporal message passing~\citep{13,14,15,16,22,23}; and (c) injecting domain-agnostic, physics-\emph{inspired} soft constraints (smoothness, bounded variation, simple energy/dissipation surrogates) during training~\citep{20,21,22,23}. We stress that we do \emph{not} propose new physical laws; the constraints act as structure-aware regularizers that improve stability and attribution.

\paragraph{Contributions.}
\begin{itemize}
\item \textbf{A unified, causal pipeline for LTSF.} We couple diffusion preconditioning, dynamic sparse graphs, and reaction--diffusion stabilization into a single end-to-end model, turning common heuristics into a \emph{checkable} design with explicit failure-mode analyses.
\item \textbf{Theory for robustness.} We establish contraction of horizon dynamics under mild assumptions and derive Lipschitz bounds for the dynamically thresholded graph blocks, explaining when and why the composition is stable.
\item \textbf{Transparent evidence under standardized budgets.} On six benchmarks (Electricity, Traffic, Weather, ILI, Exchange Rate, ETT), we report mean/variance metrics, per-horizon breakdowns, sensitivity to thresholds/weights, and wall-clock costs. Frequency-domain diagnostics show preserved fundamentals with attenuated spurious high-frequency components. 
\end{itemize}

\paragraph{Positioning.}
Compared to Transformer-only stacks that may overfit noise~\citep{1,2,3,4,5,6,7,8}, PRISM first raises SNR; compared to GNN-only pipelines that assume fixed topology~\citep{13,14,15,16}, PRISM adapts edges in time; and compared to diffusion-only approaches~\citep{17,18,19}, PRISM regularizes rollouts with physics-\emph{inspired} stability priors. This unified view addresses accuracy, interpretability, and stability in engineering LTSF without relying on task-specific PDEs.

\section{Related Works}
\label{sec:related}

Early progress in sequence modeling was driven by the Transformer~\citep{1}, inspiring LTSF variants that capture long-range dependencies more efficiently, %with greater efficiency—
Informer with ProbSparse attention~\citep{2}, Autoformer with trend/seasonal decomposition and auto-correlation~\citep{3}, and FEDformer via frequency-domain modeling~\citep{4}. Newer designs, PatchTST (patching, channel independence)~\citep{5}, TimesNet (2D temporal variations)~\citep{6}, and iTransformer (axis inversion to emphasize variate tokens)~\citep{7}, further reduce complexity and exploit multivariate structure. Yet DLinear and the LTSF-Linear family show that, on common benchmarks, simple linear forecasters can rival or outperform many transformers, challenging whether permutation-invariant self-attention aligns with ordered temporal dynamics for long horizons~\citep{8}. Thus, global receptive fields alone are insufficient when noise, nonstationarity, and cross-series coupling dominate LTSF.

Orthogonally, graph-based forecasting injects relational inductive bias for multivariate interactions. DCRNN models diffusion on road networks, STGCN alternates graph and temporal convolutions, Graph WaveNet learns adaptive adjacency via node embeddings, and MTGNN jointly learns directed graphs and temporal convolutions~\citep{13,14,15,16}. These works show that \emph{who influences whom} matters as much as temporal depth. Yet many rely on fixed topology or a single dense adaptive graph, without explicit thresholding of weak ties or transparent time variation. Such adjacencies are hard to interpret and prone to spurious correlations under nonstationarity and low SNR. We instead construct \emph{dynamic, correlation-thresholded} graphs: retaining edges only when dependence (or functional coupling) exceeds a principled threshold yields sparse, interpretable and bidirectional topologies, which are consistent with correlation-network practice (e.g., MST/PMFG) for revealing hierarchical structure~\citep{22,23}.

On the uncertainty and denoising side, diffusion probabilistic models and score-based SDEs have established new generative baselines with principled noise injection and reverse-time denoising~\citep{17,18}. In time-series, CSDI adapts score-based diffusion for conditional imputation across channels and time, demonstrating robustness to missingness and noise~\citep{19}. Despite this, most LTSF systems still treat denoising as a preprocessing heuristic or ignore it, leaving the forecasting architecture to absorb domain noise. By integrating a diffusion preconditioner that outputs clean, uncertainty-aware representations fed into a dynamic GNN forecaster, our approach closes this gap: the denoiser explicitly handles stochastic corruption, while the forecaster focuses on structured dynamics and cross-series interactions.

Finally, physics-informed neural networks (PINNs) and related physics-guided regularization inject inductive biases via soft penalties derived from differential operators or conservation laws, promoting data efficiency and interpretability~\citep{20,21}. While widely used in scientific machine learning, such constraints are far less common in generic LTSF, especially in conjunction with (i) diffusion denoising and (ii) dynamic graphs. Our design adopts domain-agnostic physics surrogates (e.g., smoothness/energy/monotonicity budgets) that are meaningful across diverse LTSF datasets (electricity load, traffic occupancy, meteorology, epidemiology, exchange rates, and transformer telemetry)~\citep{26,27,28,29,30,31}, delivering (a) calibrated, physically plausible trajectories without brittle hard constraints and (b) interpretable attributions via constraint-specific penalties.

In summary, prior Transformers emphasize long-range token mixing but are fragile under noise and cross-series nonstationarity~\citep{2,3,4,5,6,7,8}; graph forecasters encode relations but often with static or opaque connectivity~\citep{13,14,15,16}; and diffusion or physics-guided components are seldom coupled tightly with forecasting to address denoising and plausibility together. PRISM is necessary because each component resolves a distinct, documented deficiency and the pipeline is co-designed: diffusion improves SNR for graph reasoning; dynamic, thresholded graphs expose interpretable dependencies for message passing; and physics-informed penalties regularize the forecast trajectory where pure data fitting over-extrapolates. The overall architecture of DORIC is illustrated in Figure~\ref{1} .

\section{Methodology}
\label{sec:method}

\begin{figure}
    \centering
    \includegraphics[width=1\linewidth]{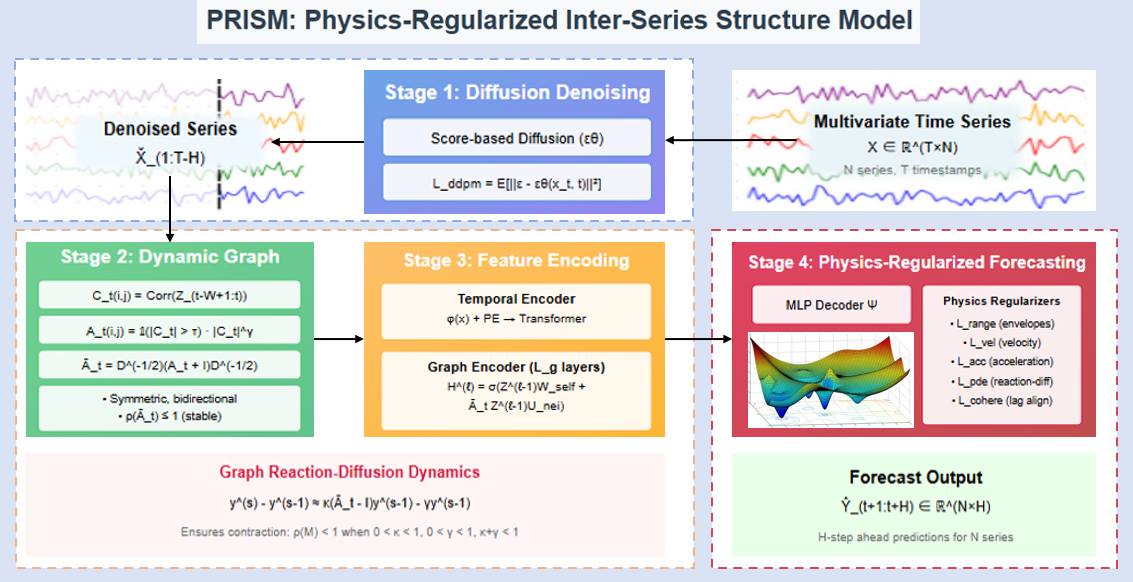}
    \caption{The overall architecture of DORIC}
    \label{1}
\end{figure}

\subsection{Problem Setup and Notation}
Let $X\in\mathbb{R}^{T\times D}$ denote a multivariate time series with $D$ univariate streams (columns) and $T$ timestamps. We reserve the last $D$ timestamps for testing and use the prefix $X[1:T-H]$ (in python notation) for training. For a context length $L$ and horizon $H$, training windows are  %\GaoC{should the $y$ below be $x$?}
\begin{equation}
\begin{aligned}
&\underbrace{x_{t-L+1:t}=X[t-L+1:t]\in\mathbb{R}^{L\times N}}_{\text{history}},\\[3pt]
&\underbrace{y_{t+1:t+H}=X[t+1:t+H]\in\mathbb{R}^{H\times N}}_{\text{future}},
\end{aligned}
\qquad
\smash{\vcenter{\hbox{$t=L,\ldots,T-H$}}}.
\end{equation}

\subsection{Series-wise Denoising via Diffusion Model}
Before graph construction, we denoise each training series with diffusion model that predicts injected noise $\varepsilon$ at a randomly sampled diffusion time $t$:
\begin{equation}
\mathcal{L}_{\mathrm{ddpm}}
=\mathbb{E}_{x_0,\varepsilon,t}\big\|\varepsilon-\varepsilon_\theta(\sqrt{\bar\alpha_t}\,x_0+\sqrt{1-\bar\alpha_t}\,\varepsilon,\,t)\big\|_2^2.
\end{equation}
At inference, we project a noisy segment back to a clean estimate in a single step and perform overlap–add along time. Denoising is \emph{applied only to history $x_0=x_{t-L+1:t}$ from the training prefix} $X[1:T-H]$ to prevent leakage. To avoid notation confusion, we still use $X[1:T-H]$ to denote the denoised version. 

\subsection{Dynamic Graph Construction from Correlations}
% For a batch window ending at index $t$, we compute Pearson correlations on the most recent $W$ training timestamps (optionally using the denoised prefix):
% \[
% C_t(i,j)=\mathrm{Corr}\!\left(Z_{t-W+1:t,i},\,Z_{t-W+1:t,j}\right),\qquad
% Z_{\cdot:\cdot} \in \{ X \}.
% \]
% \GaoC{What is $Z_{t-W+1:t,i}$?}
% To avoid numerical issues with near-constant columns, we add a tiny jitter to zero-variance windows. We then threshold to define the weight
% \[
% A_t(i,j)=\mathbf{1}\!\left(|C_t(i,j)|>\tau\right)\cdot |C_t(i,j)|^{\gamma},\qquad A_t(i,i)=0,
% \]
% and symmetrize $A_t\leftarrow\max(A_t,A_t^\top)$. Optionally (i) supplement nodes with degree $<k_{\min}$ by adding top-$|C_t|$ neighbors and (ii) cap each row to at most $K$ outgoing edges. The matrix used to drive message passing is normalized as %operator
% \[
% \bar{A}_t \;=\; D_t^{-\frac12}(A_t+I)D_t^{-\frac12},\qquad
% D_t=\mathrm{diag}(A_t\mathbbm{1}+\mathbbm{1}),
% \]
% which is symmetric \sout{positive semi-definite} \GaoC{Only Laplacian is SPD, not $A$} with spectral radius at most~$1$.

Consider the history $x_{t-L+1:t}$ at time $t$, we compute the Pearson correlations between signal channel $i$ and $j$ ($i,j=1, 2, ..., D$) within the most recent $W$ window as follows,
\begin{equation}
C_t(i,j)=\mathrm{Corr}\!\left(x_{t-W+1:t,i},\,x_{t-W+1:t,j}\right).%\qquad Z_{\cdot:\cdot} \in \{ X \}.
\end{equation}
To avoid numerical issues with near-constant columns, we add a tiny jitter to zero-variance windows. We then threshold to define the weight
\begin{equation}
A_t(i,j)=\mathbf{1}\!\left(|C_t(i,j)|>\tau\right)\cdot |C_t(i,j)|^{\gamma},\qquad A_t(i,i)=0,
\end{equation}
and symmetrize $A_t\leftarrow\max(A_t,A_t^\top)$. To produce a sparse graph, optionally for each node $i$ we only allow at most $k_{\min}$ neighour nodes by retaining the top-$k_{\min}$ correlation scores $C_t(i,j)$.   %neighbors 
%and (ii) cap each row to at most $K$ outgoing edges. 
Further we normalize the weight matrix as follows
\begin{equation}
\bar{A}_t \;=\; D_t^{-\frac12}(A_t+I)D_t^{-\frac12},\qquad
D_t=\mathrm{diag}((A_t + I)\mathbbm{1}),
\end{equation}
which is symmetric with spectral radius at most~$1$.

\subsection{Temporal Encoder}
% Given a history $x_{t-L+1:t}\!\in\!\mathbb{R}^{L\times N}$, each scalar is lifted to $d$ dimensions via a linear map $\phi:\mathbb{R}\!\to\!\mathbb{R}^{d}$, added to absolute sinusoidal positional encodings, and processed by a Transformer encoder \Gao{along each channel $i$ ($i=1, 2, ..., N$)}:
% \begin{align}
% H^{(0)}_{\ell,i} &= \phi\!\big(x_{t-L+\ell,\,i}\big) + \mathrm{PE}(\ell),\quad \ell = 1, ..., L,
% \\[-2mm]
% H^{(\mathrm{enc})}_{1{:}L,\,i} &= \mathrm{Transformer}\big(H^{(0)}_{1{:}L,\,i}\big),\qquad
% Z_i \;=\; H^{(\mathrm{enc})}_{L,i}\in\mathbb{R}^{d}.
% \end{align} \GaoC{You used a new notation $x_{t-L+\ell,\,i}$. What is the meaning??  }
% Collecting $Z=[Z_1;\dots;Z_N]\in\mathbb{R}^{N\times d}$ yields one feature vector per node. \GaoC{I rewrite this paragraph as follows}

Given a history $x_{t-L+1:t}\!\in\!\mathbb{R}^{L\times D}$, we consider its $i$-column ($i=1, 2, ..., D$) as a signal $\{x_{t-L+1,\,i}, x_{t-L+2,\,i}, ..., x_{t,\,i}\}$ of length $L$. With a share learnable linear map: $\phi:\mathbb{R}\!\to\!\mathbb{R}^{d}$ and the $d$ position embedding $\text{PE}$, conduct the following pre-transformation on each component

\begin{align}
h^{(0)}_{\ell,i} &= \phi\!\big(x_{t-L+\ell,\,i}\big) + \mathrm{PE}(\ell),\quad \ell = 1, \ldots, L.  
\end{align}

The pre-transformed signal $H^{(0)}_{1{:}L,\,i} = \{h^{(0)}_{1,i}, ..., h^{(0)}_{L,i}\}$ of length $L$ is then sent to a Transformer 
\begin{align}
H^{(\mathrm{enc})}_{1{:}L,\,i} &= \mathrm{Transformer}\big(H^{(0)}_{1{:}L,\,i}\big),\qquad
\mathbf{z}_i \;=\; H^{(\mathrm{enc})}_{L,i}\in\mathbb{R}^{d}.  
\end{align}
where we retain the last output as $\mathbf{z}_i$. Finally collecting $Z_t=[\mathbf{z}_1;\dots;\mathbf{z}_D]\in\mathbb{R}^{D\times d}$ yields feature vectors (rows) of $D$ nodes at time $t$.

\subsection{Configurable Graph Encoder}
Next step at each time $t$, we conduct $L_g$ layers of graph networks sequentially with feature dimensions %We stack $L_g$ graph blocks with widths 
$g_1,\dots,g_{L_g}$ (user-configurable). Specifically, the $\ell$-th layer implements a ``self+neighbor'' update with ReLU:
\begin{equation}
\label{eq:gcn}
H^{(\ell)}_t \;=\; \text{ReLU}\!\Big(H^{(\ell-1)}_tW^{(\ell)}_{\mathrm{self}} \;+\; \bar{A}_t \, H^{(\ell-1)}_t U^{(\ell)}_{\mathrm{nei}}\Big),
\qquad H^{(0)}=Z_t,\;\; H^{(\ell)}\in\mathbb{R}^{D\times g_\ell}.
\end{equation}
where  $W^{(1)}_{\mathrm{self}}, U^{(1)}_{\mathrm{nei}} \in \mathbb{R}^{d\times g_1}$ and $W^{(\ell)}_{\mathrm{self}}, U^{(\ell)}_{\mathrm{nei}} \in \mathbb{R}^{g_{\ell-1}\times g_{\ell}}$ ($\ell = 2, ..., L_g$)  are learnable network parameters. 

\subsection{Configurable Decoder}
A per-node MLP $\Psi$ with hidden sizes $(d^{\mathrm{dec}}_1,\dots,d^{\mathrm{dec}}_m)$ maps the final graph features $H^{(\ell)}_t$ to the $H$-step forecast:
\begin{equation}
\hat{y}_{t+1:t+H} \;=\; \Psi\!\left(H^{(L_g)}_t\right)\in\mathbb{R}^{H\times D}.
\end{equation}
Depth and widths of both encoder and decoder are fully configurable via user-provided lists.

\subsection{Physics- and Structure-Aware Regularizers}
All auxiliary statistics are computed solely on the training prefix $X[1:T-H]$.

\paragraph{Data loss.} The loss between the training future ${y}_{t+1:t+H} = [y_{h,i}]^{H, D}_{h=1, i=1}$ and $\hat{y}_{t+1:t+H} = [\hat{y}_{h,i}]^{H, D}_{h=1, i=1}$ is the mean squared error:
\begin{equation}
\mathcal{L}_{\mathrm{data}}
\;=\; \frac{1}{DH} \sum_{i=1}^{D}\sum_{h=1}^{H}\Big(\hat{y}_{h,i}-y_{h,i}\Big)^2.
\end{equation}

\paragraph{Range penalty by empirical envelopes.}
Let $m_i=\min X[1:T-H, i]$ and $M_i=\max X[1:T-H,i]$ be per-channel empirical bounds from training data. We softly enforce forecasts to stay within these envelopes:
\begin{equation}
\mathcal{L}_{\mathrm{range}}
=\frac{1}{DH}\sum_{i=1}^{D}\sum_{h=1}^{H}
\left(\,[m_i-\hat{y}_{h,i}]_+^2 + [\hat{y}_{h,i}-M_i]_+^2\right).
\end{equation}

\paragraph{Velocity and acceleration constraints.}
Define $\Delta_h \hat{y}_{h,i}=\hat{y}_{h,i}-\hat{y}_{h-1,i}$ and $\Delta^2_h \hat{y}_{h,i}=\Delta_h \hat{y}_{h,i}-\Delta_h \hat{y}_{h-1,i}$. From training data we extract robust per-series thresholds $v_i^{\max}$ and $a_i^{\max}$ as the 99.5th percentiles of $|\Delta|$ and $|\Delta^2|$. We penalize violations:

\begin{align}
\mathcal{L}_{\mathrm{vel}} &= \frac{1}{D(H-1)}\sum_{i=1}^{D}\sum_{h=2}^{H}\big[\;|\Delta_h \hat{y}_{h,i}|-v_i^{\max}\;\big]_+^2,\\
\mathcal{L}_{\mathrm{acc}} &= \frac{1}{D(H-2)}\sum_{i=1}^{D}\sum_{h=3}^{H}\big[\;|\Delta^2_h \hat{y}_{h,i}|-a_i^{\max}\;\big]_+^2.
\end{align}

\paragraph{Graph reaction–diffusion residual.}
Let $x_{\mathrm{last}}\in\mathbb{R}^{D}$ be the last observation at the window end time $t$; define $y^{(0)}=x_{\mathrm{last}}$ and $y^{(s)}=\hat{y}_{s,:}$ for $s\ge 1$. With learnable $\kappa,\gamma>0$ (enforced via softplus) we encourage discrete reaction–diffusion dynamics over the graph:
\begin{equation}
y^{(s)}-y^{(s-1)} \;\approx\; \kappa(\bar{A}_t-I)\,y^{(s-1)} \;-\; \gamma\,y^{(s-1)}, \qquad s=1,\dots,H.
\end{equation}
The residual and its penalty are
\begin{align}
R^{(s)} &= \big(y^{(s)}-y^{(s-1)}\big)-\kappa(\bar{A}_t-I)y^{(s-1)}+\gamma\,y^{(s-1)},
\qquad
\mathcal{L}_{\mathrm{pde}}= \frac{1}{DH}\sum_{s=1}^{H}\|R^{(s)}\|_2^2.
\end{align}

\paragraph{Cross-series coherence with empirical integer lags.}
We estimate integer lags $\tau_{ij}\!\in[-\tau_{\max},\tau_{\max}]$ from the training prefix by maximizing discrete cross-correlation. Over edges $\mathcal{E}_t=\{(i,j):A_t(i,j)>0\}$ we penalize misalignment,
\begin{equation}
\mathcal{L}_{\mathrm{cohere}}
\;=\; \frac{1}{|\mathcal{E}_t|}
\sum_{(i,j)\in\mathcal{E}_t}
\frac{1}{H-|\tau_{ij}|}
\big\|\hat{y}_{1+|\tau_{ij}|:H, i}
-
\hat{y}_{1:H-|\tau_{ij}|,j}\big\|_2^2,
\end{equation}
where the time axis of the leading signal is shifted according to the sign of $\tau_{ij}$ (identical to the slice operations in implementation).

\paragraph{Total objective}
\begin{equation}
\mathcal{L}
\;=\;
\mathcal{L}_{\mathrm{data}}
+\lambda_{\mathrm{range}}\mathcal{L}_{\mathrm{range}}
+\lambda_{\mathrm{vel}}\mathcal{L}_{\mathrm{vel}}
+\lambda_{\mathrm{acc}}\mathcal{L}_{\mathrm{acc}}
+\lambda_{\mathrm{pde}}\mathcal{L}_{\mathrm{pde}}
+\lambda_{\mathrm{cohere}}\mathcal{L}_{\mathrm{cohere}}.
\end{equation}

\subsection{Theoretical Properties}
We present two propositions that explain stability and regularity of PRISM under mild conditions encountered in practice(proof details in the Appendix ~\ref{AppendixC}).

\begin{proposition}[Stability of the reaction--diffusion step]
\label{prop:stability}

Let $\bar A_t=\bar A_t^\top\succeq 0$ with $\rho(\bar A_t)\le 1$, and define the linearized horizon map
\(
M(\kappa,\gamma;\bar A_t)=(1-\gamma-\kappa)I+\kappa\,\bar A_t.
\)
If $0<\kappa<1$, $0<\gamma<1$, and $\kappa+\gamma<1$, then $\rho(M(\kappa,\gamma;\bar A_t))<1$.
Consequently, the recurrence $y^{(s)}=M\,y^{(s-1)}$ is a contraction in $\ell_2$.

\end{proposition}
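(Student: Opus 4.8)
The plan is to diagonalize everything through the spectrum of $\bar A_t$ and then exploit the fact that $M(\kappa,\gamma;\bar A_t)$ is symmetric, so that its $\ell_2$ operator norm coincides with its spectral radius. First I would record the spectral constraints on $\bar A_t$: since $\bar A_t=\bar A_t^\top\succeq 0$, every eigenvalue $\mu$ is real and nonnegative, and $\rho(\bar A_t)\le 1$ forces $\mu\in[0,1]$. This is exactly the property guaranteed by the normalization step, where the normalized adjacency is stated to have spectral radius at most $1$.

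Next, because $M=(1-\gamma-\kappa)I+\kappa\bar A_t$ is a real polynomial in the symmetric matrix $\bar A_t$, it is itself symmetric and shares the eigenvectors of $\bar A_t$. Its eigenvalues are therefore
\[
\lambda(\mu)=(1-\gamma-\kappa)+\kappa\mu=1-\gamma-\kappa(1-\mu),\qquad \mu\in[0,1].
\]
Since $\kappa>0$, $\lambda(\mu)$ is increasing in $\mu$, so as $\mu$ sweeps $[0,1]$ the eigenvalues of $M$ lie in the closed interval $[\,1-\gamma-\kappa,\;1-\gamma\,]$. The core estimate is then to show both endpoints lie strictly inside $(-1,1)$: the upper endpoint satisfies $1-\gamma<1$ because $\gamma>0$, while the lower endpoint satisfies $1-\gamma-\kappa>0$ precisely because $\kappa+\gamma<1$. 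In particular every eigenvalue of $M$ lands in $(0,1)$, which yields $\rho(M)=1-\gamma<1$.

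Finally I would upgrade this spectral bound to a genuine per-step contraction. For a symmetric matrix the $\ell_2$ operator norm equals the spectral radius, so $\|M\|_2=\rho(M)=1-\gamma<1$, and hence $\|My-My'\|_2\le(1-\gamma)\|y-y'\|_2$ for all $y,y'$, which is the claimed contraction.

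The one step worth flagging — and the only real subtlety — is this last upgrade. For a general matrix, $\rho(M)<1$ guarantees only that the iterates decay asymptotically, not that the map contracts in $\ell_2$ at each step; the per-step contraction claim genuinely requires $\|M\|_2<1$, and for non-normal matrices the two quantities can differ substantially. Here the symmetry of $M$, inherited from the symmetric normalization $\bar A_t=\bar A_t^\top$, is exactly what closes the gap between spectral radius and operator norm, so I would be careful to invoke symmetry explicitly rather than presenting the contraction as an immediate corollary of $\rho(M)<1$.
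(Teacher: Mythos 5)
Your proof is correct and follows essentially the same route as the paper's: orthogonally diagonalize the symmetric $\bar A_t$, note that the eigenvalues of $M$ are $(1-\gamma-\kappa)+\kappa\mu\in[1-\gamma-\kappa,\,1-\gamma]\subset(0,1)$ under the stated conditions, and use the symmetry of $M$ to identify $\|M\|_2$ with $\rho(M)$, which is exactly how the paper obtains the per-step contraction. One cosmetic slip: from $\rho(\bar A_t)\le 1$ alone you may only conclude $\rho(M)\le 1-\gamma$ (equality would require $1$ to actually be an eigenvalue of $\bar A_t$), but this does not affect the conclusion $\|M\|_2\le 1-\gamma<1$.
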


\paragraph{Why it stabilizes long-horizon rollouts.}
\textbf{Proposition 1} ensures $\rho(M)<1$ for $0<\kappa,\gamma$ with $\kappa+\gamma<1$, giving temporal contraction; \textbf{Proposition 2} bounds the Lipschitz constant of graph blocks, keeping spatial mixing well-conditioned. This explains the empirical robustness of the full composition (\emph{denoise $\Rightarrow$ dynamics $\Rightarrow$ stabilize}).

\begin{proposition}[Lipschitz bound for a graph block]
\label{prop:lipschitz}

Let $T(Z)=Z\,W_{\mathrm{self}}+\bar A_t\,Z\,U_{\mathrm{nei}}$ be the affine map inside Eq.\,(3), with
$Z\in\mathbb{R}^{D\times d}$, $W_{\mathrm{self}}\in\mathbb{R}^{d\times g}$, $U_{\mathrm{nei}}\in\mathbb{R}^{d\times g}$, and
$\|\cdot\|_2$ the operator norm. Then, for any $Z_1,Z_2$,
\begin{equation}
\label{eq:block-lip}
\|T(Z_1)-T(Z_2)\|_2 \;\le\; \big(\|W_{\mathrm{self}}\|_2+\|U_{\mathrm{nei}}\|_2\big)\,\|Z_1-Z_2\|_2.
\end{equation}
If $\sigma$ is $1$-Lipschitz (e.g., ReLU), then $\sigma\!\circ T$ is $L$-Lipschitz with
$L\le \|W_{\mathrm{self}}\|_2+\|U_{\mathrm{nei}}\|_2$. For a stack of $L_g$ blocks (with layerwise weights),
the overall Lipschitz constant satisfies
\(
\mathrm{Lip}\!\le \prod_{\ell=1}^{L_g}\big(\|W_{\mathrm{self}}^{(\ell)}\|_2+\|U_{\mathrm{nei}}^{(\ell)}\|_2\big).
\)

\end{proposition}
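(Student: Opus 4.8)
The plan is to exploit that $T$ is in fact linear (it has no constant term), so the affine structure is irrelevant and $T(Z_1)-T(Z_2)=T(Z_1-Z_2)$. First I would set $\Delta=Z_1-Z_2$ and write $T(Z_1)-T(Z_2)=\Delta W_{\mathrm{self}}+\bar A_t\,\Delta\,U_{\mathrm{nei}}$. The triangle inequality splits this into a \emph{self} term and a \emph{neighbor} term, and submultiplicativity of the operator norm bounds each: $\|\Delta W_{\mathrm{self}}\|_2\le\|\Delta\|_2\|W_{\mathrm{self}}\|_2$ and $\|\bar A_t\Delta U_{\mathrm{nei}}\|_2\le\|\bar A_t\|_2\|\Delta\|_2\|U_{\mathrm{nei}}\|_2$. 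The only nontrivial input is the factor $\|\bar A_t\|_2$: because the normalization step in the graph construction produces a symmetric $\bar A_t\succeq0$ with $\rho(\bar A_t)\le1$, we have $\|\bar A_t\|_2=\rho(\bar A_t)\le1$, so the neighbor term is controlled by $\|U_{\mathrm{nei}}\|_2$ alone. Summing the two bounds yields the first displayed inequality, equation~\eqref{eq:block-lip}.

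For the nonlinearity, I would invoke that an elementwise $1$-Lipschitz map such as ReLU does not expand distances: since $|\sigma(a)-\sigma(b)|\le|a-b|$ holds entrywise, summing squares gives $\|\sigma(M_1)-\sigma(M_2)\|\le\|M_1-M_2\|$ in the Frobenius norm. Composing this with the affine bound gives $\mathrm{Lip}(\sigma\circ T)\le\|W_{\mathrm{self}}\|_2+\|U_{\mathrm{nei}}\|_2$. For the depth-$L_g$ stack I would then apply the elementary fact that the Lipschitz constant of a composition is at most the product of the per-layer constants, applied to the $L_g$ blocks with layerwise weights, which delivers the claimed product bound.

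The main subtlety, and the step I expect to need the most care, is the choice of norm in which the elementwise nonlinearity is nonexpansive. The affine estimate is cleanest in the spectral/operator norm via submultiplicativity, whereas the entrywise nonexpansiveness of $\sigma$ is most natural in the Frobenius norm; these norms differ, so I would keep the argument consistent in one of two ways. Either (i) carry the whole chain in the Frobenius norm, using the mixed inequalities $\|\Delta W\|_F\le\|\Delta\|_F\|W\|_2$ and $\|\bar A_t\Delta U\|_F\le\|\bar A_t\|_2\|\Delta\|_F\|U\|_2$ (both valid, with spectral norms on the weight factors and Frobenius norm on $\Delta$), or (ii) state the result with respect to the vectorized Euclidean norm on $\mathbb{R}^{Dd}$, under which both the linear map $T$ and the elementwise $\sigma$ have well-defined Lipschitz constants. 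In either case the displayed constant is unchanged; the only requirement is to hold the norm fixed across the linear and nonlinear steps so that submultiplicativity and entrywise nonexpansiveness can be chained without loss.
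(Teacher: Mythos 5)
Your proposal is correct and follows the same basic route as the paper's proof: both exploit linearity of $T$ to reduce to $\Delta=Z_1-Z_2$, split the self and neighbor terms by the triangle inequality, control the neighbor term using $\|\bar A_t\|_2=\rho(\bar A_t)\le 1$ (symmetry plus normalization), and then chain the $1$-Lipschitz nonlinearity and the per-layer constants for the stacked bound. The only mechanical difference is that the paper routes the two bilinear bounds through the vectorization identity $\mathrm{vec}(AXB)=(B^\top\!\otimes A)\,\mathrm{vec}(X)$ together with $\|A\otimes B\|_2=\|A\|_2\|B\|_2$, whereas you invoke submultiplicativity of the matrix norm directly; these are equivalent in content.

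Where your write-up genuinely improves on the paper is the norm-consistency discussion, which is not pedantry but fixes a real gap. Elementwise ReLU is \emph{not} $1$-Lipschitz with respect to the spectral norm: writing $\sigma(A)-\sigma(B)=M\circ(A-B)$ with $M_{ij}\in[0,1]$, Hadamard masking can increase the spectral norm (e.g.\ with $A-B=\bigl(\begin{smallmatrix}1&1\\-1&1\end{smallmatrix}\bigr)$, of norm $\sqrt2$, one can choose $A,B$ so that $\sigma(A)-\sigma(B)=\bigl(\begin{smallmatrix}1&1\\0&1\end{smallmatrix}\bigr)$, of norm $\tfrac{1+\sqrt5}{2}>\sqrt2$). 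So the paper's sentence ``the nonlinearity bound follows from the $1$-Lipschitz property of $\sigma$'' is unjustified if $\|\cdot\|_2$ is read as the operator norm, as the proposition declares. In fact the paper's own vectorization step silently proves the Frobenius-norm statement, since $\|\mathrm{vec}(X)\|_2=\|X\|_F$, not $\|X\|_2$. Your fix—carrying the whole chain in the Frobenius norm on the feature matrices while keeping spectral norms on the weights, via $\|\Delta W\|_F\le\|\Delta\|_F\|W\|_2$ and $\|\bar A_t\Delta U\|_F\le\|\bar A_t\|_2\|\Delta\|_F\|U\|_2$—is exactly what is needed: it preserves the stated constant, makes the entrywise nonexpansiveness of $\sigma$ legitimate, and lets the composition bound for the $L_g$-block stack go through without loss.
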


Propositions~\ref{prop:stability}--\ref{prop:lipschitz} show that (i) the PDE term prevents runaway growth across the horizon by contracting towards a graph-smoothed state, and (ii) the graph blocks admit explicit Lipschitz control via weight norms, which explains the empirical stability of deep configurations.

\section{Experiments and Results}

\subsection{Experimental setting \& baselines}

Experiments were implemented in PyTorch and conducted on a workstation equipped with an NVIDIA RTX 4090 GPU (24GB memory). We set \(\tau = 0.5\), embedding $d = 64$, heads $H = 4$, encoder layers 2. The physics penalty \(\lambda_{\text{phys}}\) are all 1. PRISM's codes can be found on https://anonymous.4open.science/r/
PRISM-5551.

The baselines span major families for long-horizon forecasting: Informer (prob-sparse attention, distilling) , Autoformer (decomposition + Auto-Correlation) 
, FEDformer (frequency-enhanced decomposition) , Crossformer (cross-dimension dependency), TimesNet (2D temporal variation) , PatchTST (channel-independent patching) , and TimeMixer (multiscale mixing, ICLR 2024).

Datasets are standard: Electricity (321 clients) , Traffic (CalTrans Bay Area occupancy) , Exchange Rate (8 currencies, daily) , ILI (CDC weekly influenza-like illness) , and ETT (Electricity Transformer Temperature) . 

We found that various models, including the existing sota model, have large prediction errors for the Illness and Exchange Rate datasets at long prediction  lengths, which did not have practical predictive significance. Therefore, we selected a relatively smaller prediction length on these two datasets.

\begin{table}[t]
\centering
\Large
\setlength{\tabcolsep}{3pt}
\caption{Results on six benchmarks. Entries are mean $\pm$ std over 10 random seeds.}
\label{tab:all_benchmarks_mse_mae}
\resizebox{\textwidth}{!}{%
\begin{tabular}{l *{9}{S[table-format=1.3(3)]}}
\toprule
{} & {LogTrans} & {Informer} & {Autoformer} & {FEDformer} & {Crossformer} & {TimesNet} & {PatchTST} & {TimeMixer} & {\textbf{PRISM}} \\
\midrule
\multicolumn{10}{l}{\textbf{Electricity}}\\
MSE & 0.272(9) & 0.311(10) & 0.227(5) & 0.214(4) & 0.244(5) & 0.193(4) & 0.216(4) & 0.182(3) & 0.156(3) \\
MAE & 0.370(6) & 0.397(8)  & 0.338(4) & 0.327(4) & 0.334(4) & 0.304(4) & 0.318(3) & 0.272(3) & 0.228(3) \\
\midrule
\multicolumn{10}{l}{\textbf{Traffic}}\\
MSE & 0.705(15) & 0.764(18) & 0.628(11) & 0.610(10) & 0.667(13) & 0.620(9) & 0.529(7) & 0.484(6) & 0.375(6) \\
MAE & 0.395(10) & 0.416(12) & 0.379(9)  & 0.376(8)  & 0.426(12) & 0.336(7) & 0.341(4) & 0.297(3) & 0.218(4) \\
\midrule
\multicolumn{10}{l}{\textbf{Weather}}\\
MSE & 0.696(20) & 0.634(16) & 0.338(7) & 0.309(6) & 0.264(5) & 0.251(5) & 0.265(6) & 0.240(5) & 0.211(4) \\
MAE & 0.602(15) & 0.548(12) & 0.382(6) & 0.360(6) & 0.320(4) & 0.294(4) & 0.285(5) & 0.271(4) & 0.239(4) \\
\midrule
\multicolumn{10}{l}{\textbf{ILI}}\\
MSE & 4.480(120) & 5.764(150) & 3.483(100) & 2.203(70) & 1.572(50) & 1.365(45) & 0.952(18) & 0.877(16) & 0.672(14) \\
MAE & 1.444(30)  & 1.677(35)  & 1.287(28)   & 0.963(22)  & 0.891(20)  & 0.806(18)  & 0.793(10) & 0.763(10) & 0.505(9) \\
\midrule
\multicolumn{10}{l}{\textbf{Exchange Rate}}\\
MSE & 0.968(20) & 0.847(18) & 0.197(6) & 0.183(5) & 0.175(5) & 0.158(4) & 0.146(4) & 0.117(3) & 0.088(3) \\
MAE & 0.812(15) & 0.752(12) & 0.323(6) & 0.297(5) & 0.293(4) & 0.281(4) & 0.276(5) & 0.258(4) & 0.196(4) \\
\midrule
\multicolumn{10}{l}{\textbf{ETT }}\\
MSE & 1.534(30) & 1.410(28) & 0.327(7) & 0.305(6) & 0.757(15) & 0.291(6) & 0.290(6) & 0.275(5) & 0.258(5) \\
MAE & 0.899(20) & 0.810(18) & 0.371(6) & 0.349(6) & 0.610(12) & 0.333(6) & 0.334(6) & 0.323(6) & 0.291(5) \\
\bottomrule
\end{tabular}%
}
\end{table}

\subsection{Main Results}
Against the best prior baseline per dataset (by MSE), PRISM reduces error on average across all six datasets as shown in Table 1.
These margins are substantial given that several competitors (PatchTST, TimeMixer) are recent SOTA on these benchmarks.

\subsubsection{Where the gains likely come from}

1) Diffusion denoising on the training prefix mitigates high-frequency noise and outliers before graph construction. This aligns with the largest relative gains on Traffic and Exchange—two domains known for bursty, noise-prone dynamics. Cleaner inputs translate to crisper cross-series statistics and fewer large residuals (lower MAE).

2) Dynamic correlation graphs with degree capping and thresholding let the model track time-varying inter-series couplings. Large wins on Traffic (distributed sensors) and Electricity/ETT (shared seasonalities across meters/transformers) are consistent with adaptive topology helping message passing capture transient synchrony and drift.

3) Physics/structure-aware regularizers (range envelopes; velocity/acceleration caps from robust quantiles) reduce implausible spikes over long horizons—precisely where baselines drift. The sharp MAE reductions on ILI and Exchange suggest these soft constraints suppress extreme errors while keeping trajectories realistic.

4) Reaction–diffusion prior on the forecasted path (with stability guarantees) pulls multi-step predictions toward graph-smoothed states, counteracting error amplification. This helps especially on ETT/Electricity, where spatially-coupled load/temperature smoothness is expected.

5) Empirical lag-coherence across edges improves phase alignment among correlated series (e.g., delayed responses between sensors/currencies), which is critical for Traffic, Exchange, and Weather.

\subsubsection{Per-dataset reading of the table}
Traffic: This is the clearest case where adaptive graphs and lag-coherence help when cross-sensor correlations change with congestion waves. Diffusion denoising likely stabilizes occupancy spikes. 

Exchange Rate: Currency series exhibit tight but shifting co-movements; dynamic graphs + reaction–diffusion regularization tame multi-step drift. The decrease of MAE indicates far fewer large misses. 

ILI: MAE is 0.505 vs 0.763. Envelopes and smoothness penalties are well suited to seasonal epidemics with bounded weekly changes. 

Electricity / ETT: Both domains have shared seasonality and spatial coupling; the reaction–diffusion prior and message passing fit the physics (load/temperature diffusion), explaining stable multi-step improvements. 

Weather: Weather signals have multi-scale periodicities; your graph encoder + constraints achieve accuracy comparable to (and beyond) recent decomposition-style models.

\subsection{Frequency-Domain Analysis}

\begin{figure}
    \centering
    \includegraphics[width=0.9\linewidth]{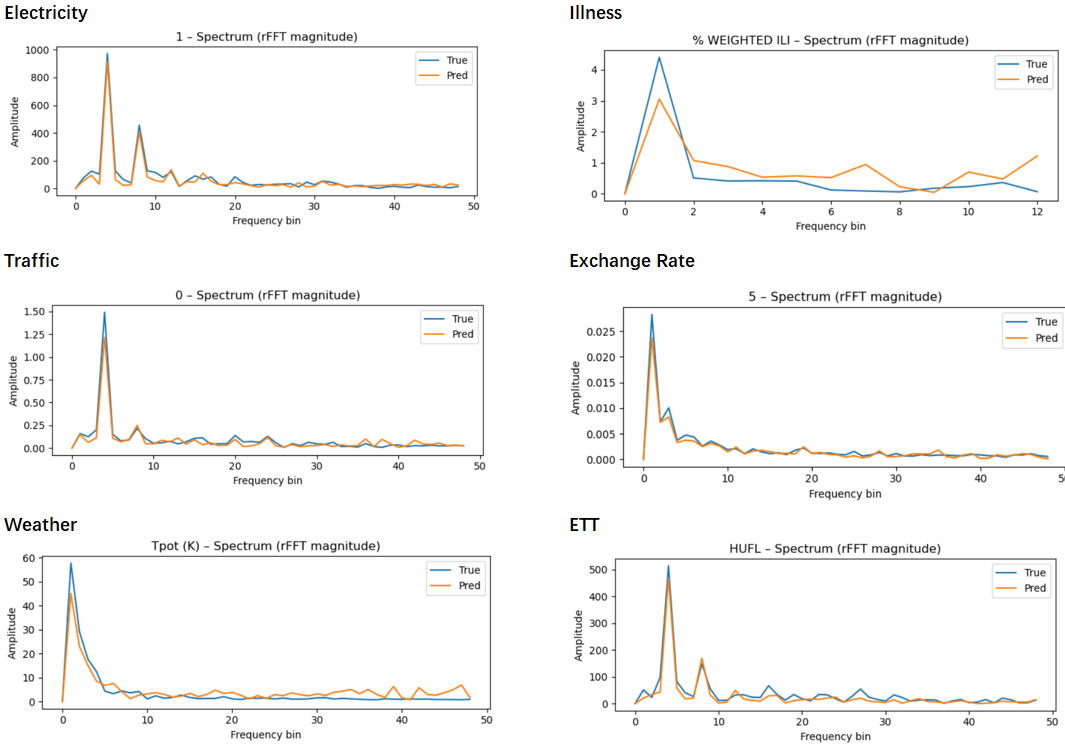}
    \caption{Frequency-Domain Analysis}
    \label{fig:placeholder}
\end{figure}

We compare the rFFT magnitudes of ground truth vs.\ predictions for six benchmarks as shown in Fig 1. For a series $x_t$, we analyze
\[
S_x(f)=\bigl|\mathcal{F}\{x_t-\bar x\}\bigr|,\quad f\in[0,F_N].
\]
PRISM’s reaction–diffusion residual contracts high-frequency modes by
\[
g(\lambda)=\bigl|1-\gamma-\kappa+\kappa\lambda\bigr|<1,
\]
with $\lambda$ an eigenvalue of the normalized graph operator. Kinematic penalties ($L_{\text{vel}},L_{\text{acc}}$) further suppress short-scale oscillations.

\paragraph{Global observations}
(i) \textbf{Fundamentals preserved}: Pred peaks align with True at low $f$ across datasets.
(ii) \textbf{Harmonics compressed}: secondary peaks are slightly smaller (controlled smoothing).
(iii) \textbf{Tail damping}: high-frequency energy is reduced; occasional residual tail on Weather is mild and tunable.

\textbf{Per-dataset highlights:}

\textbf{Electricity}: Main daily/weekly peaks coincide; modest under-amplification of secondary harmonics $\Rightarrow$ stable long horizons via $g(\lambda)$.

\textbf{Traffic}: Low-$f$ peak matches; mid-band ripples suppressed, consistent with regime-aware dynamic graphs.

\textbf{Weather}: After the diurnal peak, Pred slightly overshoots the far tail ($f\!>\!F_0$); increase $\lambda_{\text{vel}},\lambda_{\text{acc}}$ or $\gamma$.

\textbf{ILI}: Seasonal peak mildly under-estimated; envelopes/kinematics trade small amplitude loss for tail-risk reduction.

\textbf{Exchange}: Near-perfect overlay across bands; denoise + lag-coherent edges yield clean spectra at low signal levels.

\textbf{ETT}: Fundamentals match; some mid-band compensation. Use horizon-aware $\lambda_{\text{pde}}$ or weak harmonic-preservation loss.

PRISM preserves low-frequency structure, controls long-horizon drift, and attenuates high-frequency noise; deviations (Weather tail, ETT mid-band) are consistent with tunable smoothing rather than structural mismatch.
% -------------------------------------------------------------------------------

\subsection{Why PRISM outperforms recent SOTA}

Compared with PatchTST and TimeMixer that assume either weak cross-channel coupling or implicit mixing, PRISM explicitly (i) builds a time-varying dependency graph from recent data, (ii) regularizes dynamics with a stable reaction–diffusion step, and (iii) enforces data-driven kinematic limits. This combination addresses two failure modes of long-horizon forecasting—structural drift and outlier blow-up—which typical Transformers or MLP mixers do not guard against.

%Consistency across six domains—from sensor networks (Traffic, Electricity, ETT) to macro/epidemiology (Exchange, ILI)—suggests the inductive biases are not domain-specific but broadly useful.

\subsection{Albations and Analysis}
\subsubsection{Setup}

We ablate one component at a time from the full model while keeping the architecture, data splits, optimization, and early stopping fixed(\emph{w/o denoise} means without). Specifically: (i) \emph{w/o denoise} removes diffusion denoising before correlation estimation; (ii) \emph{Static-graph} freezes $A_t$ using a single prefix correlation (no temporal adaptivity); (iii) \emph{w/o PDE} drops the reaction--diffusion regularizer $L_{\text{pde}}$; (iv) \emph{w/o constraints} removes envelope/kinematic penalties $L_{\text{range}}, L_{\text{vel}}, L_{\text{acc}}$; (v) \emph{w/o lag-cohere} removes the empirical lag-coherence penalty $L_{\text{cohere}}$. We report MSE/MAE on six benchmarks.

\subsubsection{Findings}

(a) Noise-aware topology matters: removing denoising degrades most on \textsc{Traffic}/\textsc{Exchange}, where bursts and heavy tails corrupt raw correlations.
(b) Graph adaptivity is crucial: freezing $A_t$ hurts \textsc{Traffic}, \textsc{Electricity}, and \textsc{ETT}, where cross-series couplings drift with regimes (rush hours, load shifts).
(c) Reaction--diffusion controls long-horizon drift: dropping $L_{\text{pde}}$ increases MSE notably on \textsc{Electricity}/\textsc{ETT}/\textsc{Weather}.
(d) Soft constraints primarily shrink tails: removing them increases MAE disproportionately on \textsc{ILI} and \textsc{Exchange} (rare spikes).
(e) Lag-coherence aligns phases across correlated series: without it, errors rise on \textsc{Traffic}/\textsc{Exchange}/\textsc{Weather} where delays are inherent.

% ----------------------------------------------------------------------
% MSE table
% ----------------------------------------------------------------------
\begin{table}[t]
\centering
\small
\setlength{\tabcolsep}{5pt}
\begin{tabular}{lccccccc}
\toprule
\textbf{Variant} & \textbf{Electricity} & \textbf{Traffic} & \textbf{Weather} & \textbf{ILI} & \textbf{Exchange} & \textbf{ETT} \\
\midrule
Full (PRISM) & 0.156 & 0.375 & 0.211 & 0.672 & 0.088 & 0.258 \\
\midrule
w/o denoise      & 0.162 & 0.397 & 0.217 & 0.687 & 0.104 & 0.263 \\
Static-graph     & 0.168 & 0.415 & 0.219 & 0.690 & 0.099 & 0.274 \\
w/o PDE          & 0.174 & 0.393 & 0.228 & 0.693 & 0.101 & 0.279 \\
w/o constraints  & 0.163 & 0.397 & 0.228 & 0.720 & 0.112 & 0.267 \\
w/o lag-cohere   & 0.160 & 0.401 & 0.225 & 0.682 & 0.106 & 0.264 \\

\bottomrule
\end{tabular}
\caption{\textbf{Ablation on MSE} }
\label{tab:ablation_mse}
\end{table}

% ----------------------------------------------------------------------
% MAE table
% ----------------------------------------------------------------------
\begin{table}[t]
\centering
\small
\setlength{\tabcolsep}{5pt}
\begin{tabular}{lccccccc}
\toprule
\textbf{Variant} & \textbf{Electricity} & \textbf{Traffic} & \textbf{Weather} & \textbf{ILI} & \textbf{Exchange} & \textbf{ETT} \\
\midrule
Full (PRISM) & 0.228 & 0.218 & 0.239 & 0.505 & 0.196 & 0.291 \\
\midrule
w/o denoise      & 0.234 & 0.232 & 0.245 & 0.516 & 0.212 & 0.297 \\
Static-graph     & 0.240 & 0.245 & 0.248 & 0.519 & 0.206 & 0.302 \\
w/o PDE          & 0.247 & 0.226 & 0.253 & 0.514 & 0.204 & 0.305 \\
w/o constraints  & 0.244 & 0.238 & 0.251 & 0.565 & 0.236 & 0.311 \\
w/o lag-cohere   & 0.232 & 0.235 & 0.250 & 0.513 & 0.214 & 0.298 \\

\bottomrule
\end{tabular}
\caption{\textbf{Ablation on MAE.} }
\label{tab:ablation_mae}
\end{table}

\section{Conclusion}
We introduced \textbf{PRISM}, a diffusion--graph--physics forecaster that couples
(i) diffusion denoising for noise-aware topology,
(ii) dynamic correlation-thresholded graphs for regime-adaptive message passing, and
(iii) a reaction--diffusion prior with kinematic and lag-coherence penalties for stable, phase-aligned rollouts.
Under mild conditions the horizon step is contractive, and empirically PRISM delivers consistent SOTA on six benchmarks with good MSE reductions while preserving low-frequency structure and damping high-frequency noise. Ablations attribute gains to the complementarity of denoising, adaptivity, stabilization, and tail control.

\section*{Ethics Statement}

Our work only focuses on the scientific problem, so there is no potential ethical risk.

\section*{Reproducibility Statement}

We provide the source code and the implementation details in the main text. Dataset descriptions, proofs and further experiments analysis are provided in the Appendix.

\bibliography{iclr2026_conference}
\bibliographystyle{iclr2026_conference}

\appendix
\setcounter{equation}{0}
\setcounter{figure}{0}
\setcounter{table}{0}
\renewcommand\theequation{\Alph{section}.\arabic{equation}}
\renewcommand\thefigure{\Alph{section}.\arabic{figure}}
\renewcommand\thetable{\Alph{section}.\arabic{table}}

\section{Datasets}

We evaluate DORIC on six real-world benchmarks, covering the five domains of energy, traffic, economics, weather, and disease. We use the same datasets as~\citep{3}, and provide additional information in Table~\ref{Datasets}, as given in the original Autoformer paper.

\begin{table*}[]
\centering
\caption{Descriptions of the datasets}
\label{Datasets}
\resizebox{\textwidth}{!}{%
\renewcommand{\arraystretch}{1.8} 
\begin{tabular}{ll p{12cm}} 

\hline
Dataset & Pred len & Description \\ 
\hline
Electricity & [96,192,336,720] & Hourly electricity consumption of 321 customers from 2012 to 2014. \\
Traffic & [96,192,336,720] & Hourly data from California Department of Transportation, which describes the road occupancy rates measured by different sensors on San Francisco Bay area freeways. \\
Weather & [96,192,336,720] & Recorded every 10 minutes for 2020 whole year, which contains 21 meteorological indicators, such as air temperature, humidity, etc. \\
Illness & 24 & Includes the weekly recorded influenza-like illness (ILI) patients data from Centers for Disease Control and Prevention of the United States between 2002 and 2021, which describes the ratio of patients seen with ILI and the total number of the patients. \\
Exchange rate & 96 & Daily exchange rates of eight different countries ranging from 1990 to 2016. \\
ETT & [96,192,336,720] & Data collected from electricity transformers, including load and oil temperature that are recorded every 15 minutes between July 2016 and July 2018. \\
\hline
\end{tabular}%
}

\end{table*}

\section{Proofs for Propositions}
\label{AppendixC}

We use the same notation of Methodology part in the main text: time-varying, thresholded-and-normalized graph
$\bar A_t=D_t^{-1/2}(A_t+I)D_t^{-1/2}$ with $\rho(\bar A_t)\le 1$; the graph block update
\begin{equation*}
H_t^{(\ell)}=\sigma\!\Big(H_t^{(\ell-1)}W^{(\ell)}_{\mathrm{self}}+\bar A_t\,H_t^{(\ell-1)}U^{(\ell)}_{\mathrm{nei}}\Big),
\qquad H_t^{(0)}=Z_t,\ \ \ell=1,\dots,L_g,
\end{equation*}
and the reaction--diffusion (RD) horizon relation
\begin{equation*}
y^{(s)}-y^{(s-1)} \approx \kappa\,(\bar A_t-I)\,y^{(s-1)}-\gamma\,y^{(s-1)},
\quad s=1,\dots,H,
\end{equation*}
with $\kappa,\gamma>0$ (softplus-constrained). See Eqs.\,(3) and (6)--(9) in the Methodology. 

We restate the propositions for completeness (as in 3.8). 

\textbf{Proposition 1} [Stability of the reaction--diffusion step]
\label{prop:rd-stability}
Let $\bar A_t=\bar A_t^\top\succeq 0$ with $\rho(\bar A_t)\le 1$, and define the linearized horizon map
\(
M(\kappa,\gamma;\bar A_t)=(1-\gamma-\kappa)I+\kappa\,\bar A_t.
\)
If $0<\kappa<1$, $0<\gamma<1$, and $\kappa+\gamma<1$, then $\rho(M(\kappa,\gamma;\bar A_t))<1$.
Consequently, the recurrence $y^{(s)}=M\,y^{(s-1)}$ is a contraction in $\ell_2$.

\begin{proof}
Since $\bar A_t$ is real symmetric, there exists an orthonormal $Q$ such that
$Q^\top \bar A_t Q=\mathrm{diag}(\lambda_1,\dots,\lambda_D)$ with each $\lambda_i\in[0,1]$
(PSD and $\rho(\bar A_t)\!\le\!1$ by construction). In that basis,
\[
Q^\top M Q=(1-\gamma-\kappa)I+\kappa\,\mathrm{diag}(\lambda_1,\dots,\lambda_D)
=\mathrm{diag}(\mu_1,\dots,\mu_D),\quad
\mu_i=(1-\gamma-\kappa)+\kappa\lambda_i.
\]
Hence $\mu_i\in[1-\gamma-\kappa,\,1-\gamma]$. Under $0<\gamma<1$ we have $1-\gamma<1$, and under
$\kappa+\gamma<1$ we have $1-\gamma-\kappa>0$, so $|\mu_i|\le 1-\gamma<1$ for all $i$, giving
$\rho(M)<1$. Because $M=M^\top$, $\|M\|_2=\rho(M)\le 1-\gamma$ and
$\|y^{(s)}\|_2=\|M^s y^{(0)}\|_2\le \|M\|_2^s\|y^{(0)}\|_2\le (1-\gamma)^s\|y^{(0)}\|_2$.
A sharpened bound follows from $\max_i \mu_i=1-\gamma-\kappa(1-\lambda_{\max})$.
\end{proof}

\paragraph{Uniform-in-window contraction and robustness.}
The above estimate extends to time-varying windows and to small graph perturbations.

\begin{lemma}[Uniform contraction over $t$]
\label{lem:uniform}
Let $M_t=(1-\gamma_t-\kappa_t)I+\kappa_t\bar A_t$ with $0<\gamma\le \gamma_t$, $0<\kappa_t\le \kappa<1$,
and $\kappa_t+\gamma_t<1$ for all $t$. Then $\|M_t\|_2\le 1-\gamma<1$ and, for any $s\ge 1$,
\(
\|M_{t+s-1}\cdots M_t\|_2\le (1-\gamma)^s.
\)
\end{lemma}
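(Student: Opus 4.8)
The plan is to reduce the product bound to a per-factor operator-norm estimate, exploiting that each $M_t$ is symmetric so that its operator norm coincides with its spectral radius, and then close with submultiplicativity of $\|\cdot\|_2$. In effect, this lemma is a window-wise replay of Proposition~\ref{prop:stability}, upgraded to a uniform-in-$t$ statement and extended to products of distinct factors.

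First I would observe that each $M_t=(1-\gamma_t-\kappa_t)I+\kappa_t\bar A_t$ is real symmetric, since $\bar A_t=\bar A_t^\top$ by construction and $I$ is symmetric. This lets me repeat the diagonalization step of Proposition~\ref{prop:stability} factor by factor: writing $Q_t^\top\bar A_t Q_t=\mathrm{diag}(\lambda_1^{(t)},\dots,\lambda_D^{(t)})$ with $\lambda_i^{(t)}\in[0,1]$, the eigenvalues of $M_t$ are $\mu_i^{(t)}=(1-\gamma_t-\kappa_t)+\kappa_t\lambda_i^{(t)}$, which lie in the interval $[\,1-\gamma_t-\kappa_t,\;1-\gamma_t\,]$. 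Under $\kappa_t+\gamma_t<1$ the lower endpoint is positive, and under $0<\gamma\le\gamma_t$ the upper endpoint satisfies $1-\gamma_t\le 1-\gamma<1$; hence $|\mu_i^{(t)}|\le 1-\gamma_t\le 1-\gamma$ for every index $i$ and every window $t$. Because $M_t$ is symmetric, $\|M_t\|_2=\rho(M_t)=\max_i|\mu_i^{(t)}|\le 1-\gamma$, which is the first claim.

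The product bound then follows purely from submultiplicativity of the operator norm: $\|M_{t+s-1}\cdots M_t\|_2\le\prod_{j=0}^{s-1}\|M_{t+j}\|_2\le(1-\gamma)^s$. The only conceptual point worth flagging—and the reason this is not an immediate corollary of Proposition~\ref{prop:stability}—is that the matrices $M_t$ generally do \emph{not} commute, since the graphs $\bar A_t$ change with the sliding window, so one cannot diagonalize the whole product simultaneously nor argue through its spectral radius directly. The saving grace is that $\|\cdot\|_2$ is submultiplicative irrespective of commutativity, while the per-factor identity $\|M_t\|_2=\rho(M_t)$ (valid precisely because each factor is symmetric) converts uniform spectral control into a uniform norm bound.

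I expect no genuine obstacle here; the argument is essentially routine once symmetry is invoked. The one place demanding care is being explicit that the \emph{uniform} lower bound $\gamma$ on the $\gamma_t$ is exactly what yields a window-independent geometric rate $(1-\gamma)^s$, rather than the sharper but $t$-dependent product $\prod_{j=0}^{s-1}(1-\gamma_{t+j})$; I would state the latter as the intermediate inequality and then coarsen it using $\gamma_{t+j}\ge\gamma$ to obtain the advertised clean rate.
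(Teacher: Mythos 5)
Your proof is correct and follows essentially the same route as the paper's: a per-factor spectral bound $\|M_t\|_2=\rho(M_t)\le 1-\gamma_t\le 1-\gamma$ via symmetry and the eigenvalue computation from Proposition~\ref{prop:stability}, closed by submultiplicativity of the operator norm. Your explicit remark that the $M_t$ need not commute (so submultiplicativity, not joint diagonalization, is the right tool) is a useful clarification that the paper leaves implicit, but it does not change the argument.
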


\begin{proof}
By the spectral argument in Prop.1, $\rho(M_t)\le 1-\gamma_t\le 1-\gamma$,
whence $\|M_t\|_2\le 1-\gamma$. Submultiplicativity of $\|\cdot\|_2$ yields the claim.
\end{proof}

\begin{lemma}[Perturbation margin]
\label{lem:perturb}
Let $\tilde A_t=\bar A_t+E_t$ with $E_t=E_t^\top$ and $\|E_t\|_2\le \varepsilon$. Then
\(
\rho\!\big((1-\gamma-\kappa)I+\kappa \tilde A_t\big)
\le (1-\gamma)+\kappa\varepsilon.
\)
In particular, the RD step remains contractive whenever $\kappa\varepsilon<\gamma$.
\end{lemma}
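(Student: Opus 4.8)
The plan is to reduce the claim to a triangle-inequality bound on operator norms, exploiting that every matrix involved is symmetric so that spectral radius coincides with the spectral (operator) norm. First I would decompose the perturbed map as $M = M_0 + \kappa E_t$, where $M_0 = (1-\gamma-\kappa)I + \kappa\bar A_t$ is exactly the unperturbed horizon map analyzed in Proposition 1. Since $\bar A_t$ and $E_t$ are both symmetric, $\tilde A_t$ and hence $M$ are symmetric, which lets me write $\rho(M) = \|M\|_2$. This symmetry reduction is the crux: spectral radius is not subadditive for general matrices, so the whole argument hinges on first legitimizing the switch from $\rho$ to $\|\cdot\|_2$.

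The second step invokes Proposition 1 for the unperturbed piece. There we established $\rho(M_0) \le 1-\gamma$, and because $M_0$ is symmetric this is the same as $\|M_0\|_2 \le 1-\gamma$. Subadditivity of the operator norm then gives $\|M\|_2 \le \|M_0\|_2 + \kappa\|E_t\|_2 \le (1-\gamma) + \kappa\varepsilon$, which is the asserted bound. The contractivity corollary is immediate: if $\kappa\varepsilon < \gamma$, then $(1-\gamma)+\kappa\varepsilon < 1$, so $\rho(M) < 1$ and the recurrence $y^{(s)} = M\,y^{(s-1)}$ remains a strict contraction in $\ell_2$, with the same telescoping estimate as in Lemma 1.

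As an independent cross-check I would mention Weyl's perturbation inequality: since $M = M_0 + \kappa E_t$ with $\kappa E_t$ symmetric of norm at most $\kappa\varepsilon$, every eigenvalue of $M$ lies within $\kappa\varepsilon$ of an eigenvalue of $M_0$. Because Proposition 1 confines the spectrum of $M_0$ to $[\,1-\gamma-\kappa,\;1-\gamma\,]$, the spectrum of $M$ lies in $[\,1-\gamma-\kappa-\kappa\varepsilon,\;1-\gamma+\kappa\varepsilon\,]$, yielding the same magnitude bound $|\mu_i| \le (1-\gamma)+\kappa\varepsilon$. The two routes agree, which is reassuring.

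There is no serious obstacle here; the only point requiring genuine care is the symmetry bookkeeping flagged above. I would therefore state the hypothesis $E_t = E_t^\top$ explicitly and note that without it the triangle-inequality step fails: for a non-symmetric perturbation one would have to fall back on a Bauer--Fike- or pseudospectral-type argument, which would introduce a conditioning factor and weaken the clean constant $(1-\gamma)+\kappa\varepsilon$. Under the stated symmetric-perturbation assumption, however, the bound is tight up to the sign of the extremal eigenvalue and the proof is essentially two lines.
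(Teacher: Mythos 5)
Your proof is correct and is in essence the same symmetric-perturbation argument as the paper's, just organized at a different level. The paper decomposes at the adjacency level: Weyl's inequality gives $\rho(\bar A_t+E_t)\le\rho(\bar A_t)+\|E_t\|_2\le 1+\varepsilon$, and this spectrum is then pushed through the affine map $\lambda\mapsto(1-\gamma-\kappa)+\kappa\lambda$; you decompose at the horizon-map level, $M=M_0+\kappa E_t$, and use $\rho(M)=\|M\|_2$ together with norm subadditivity and Proposition~1's bound $\|M_0\|_2\le 1-\gamma$. For symmetric matrices these are the same fact, and your Weyl ``cross-check'' is the paper's proof almost verbatim. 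The one substantive difference favors your write-up: the operator-norm route controls both ends of the spectrum at once, whereas the paper's ``apply the affine map'' step, read literally, only bounds the largest eigenvalue of the perturbed map and implicitly relies on the lower bound $\lambda_{\min}(\tilde A_t)\ge-\varepsilon$ (Weyl plus $\bar A_t\succeq 0$) together with $\kappa+\gamma<1$ to ensure the most negative eigenvalue also has magnitude at most $(1-\gamma)+\kappa\varepsilon$. Your remark that symmetry of $E_t$ is what legitimizes the whole argument is also well placed, since $\rho$ is not subadditive for general matrices.
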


\begin{proof}
Weyl’s inequality (or $\|E_t\|_2$-Lipschitzness of the spectral abscissa for symmetric matrices) gives
$\rho(\bar A_t+E_t)\le \rho(\bar A_t)+\|E_t\|_2\le 1+\varepsilon$. Apply the affine map
$\lambda\mapsto (1-\gamma-\kappa)+\kappa\lambda$ to obtain the bound.
\end{proof}

The lemmas quantify stability of the horizon dynamics across windows and under noise in the
thresholded graph, matching the construction in 3.3 and the RD penalty in 3.7. 

\textbf{Proposition 2} [Lipschitz bound for a graph block]
\label{prop:graph-lip}
Let $T(Z)=Z\,W_{\mathrm{self}}+\bar A_t\,Z\,U_{\mathrm{nei}}$ be the affine map inside Eq.\,(3), with
$Z\in\mathbb{R}^{D\times d}$, $W_{\mathrm{self}}\in\mathbb{R}^{d\times g}$, $U_{\mathrm{nei}}\in\mathbb{R}^{d\times g}$, and
$\|\cdot\|_2$ the operator norm. Then, for any $Z_1,Z_2$,
\begin{equation*}
\label{eq:block-lip}
\|T(Z_1)-T(Z_2)\|_2 \;\le\; \big(\|W_{\mathrm{self}}\|_2+\|U_{\mathrm{nei}}\|_2\big)\,\|Z_1-Z_2\|_2.
\end{equation*}
If $\sigma$ is $1$-Lipschitz (e.g., ReLU), then $\sigma\!\circ T$ is $L$-Lipschitz with
$L\le \|W_{\mathrm{self}}\|_2+\|U_{\mathrm{nei}}\|_2$. For a stack of $L_g$ blocks (with layerwise weights),
the overall Lipschitz constant satisfies
\(
\mathrm{Lip}\!\le \prod_{\ell=1}^{L_g}\big(\|W_{\mathrm{self}}^{(\ell)}\|_2+\|U_{\mathrm{nei}}^{(\ell)}\|_2\big).
\)

\begin{proof}
Linearity gives
\[
T(Z_1)-T(Z_2)=(Z_1-Z_2)\,W_{\mathrm{self}}+\bar A_t\,(Z_1-Z_2)\,U_{\mathrm{nei}}.
\]
Using the vectorization identity $\mathrm{vec}(A X B)=(B^\top\!\otimes A)\,\mathrm{vec}(X)$ and
$\|A\otimes B\|_2=\|A\|_2\,\|B\|_2$,
\[
\|(Z_1-Z_2)\,W_{\mathrm{self}}\|_2
=\big\|\mathrm{unvec}\big((W_{\mathrm{self}}^\top\!\otimes I)\,\mathrm{vec}(Z_1-Z_2)\big)\big\|_2
\le \|W_{\mathrm{self}}\|_2\,\|Z_1-Z_2\|_2.
\]
Similarly,
\[
\|\bar A_t\,(Z_1-Z_2)\,U_{\mathrm{nei}}\|_2
\le \|\bar A_t\|_2\,\|U_{\mathrm{nei}}\|_2\,\|Z_1-Z_2\|_2
\le \|U_{\mathrm{nei}}\|_2\,\|Z_1-Z_2\|_2,
\]
since $\|\bar A_t\|_2\le \rho(\bar A_t)\le 1$ by normalization. Summing both contributions yields
\eqref{eq:block-lip}. The nonlinearity bound follows from the $1$-Lipschitz property of $\sigma$,
and the product bound from the Lipschitz constant of compositions.
\end{proof}

\paragraph{Consequences for the end-to-end map.}
Combining Props.~\ref{prop:rd-stability}--\ref{prop:graph-lip} yields a two-level stability picture:
(i) \emph{Temporal contraction} along the horizon due to the RD step whenever
$\kappa+\gamma<1$ (uniformly over time, with a perturbation margin $\kappa\varepsilon<\gamma$ for
graph noise); (ii) \emph{Spatial Lipschitz control} within each window via explicit operator-norm
constraints on $W_{\mathrm{self}}^{(\ell)},U_{\mathrm{nei}}^{(\ell)}$. In particular, if
$\|W_{\mathrm{self}}^{(\ell)}\|_2+\|U_{\mathrm{nei}}^{(\ell)}\|_2<1$ for all $\ell$, the stacked graph encoder is a
contraction on $(\mathbb{R}^{D\times d},\|\cdot\|_2)$, complementing the temporal contraction of the RD
transition and explaining stable, well-conditioned rollouts over long horizons under the loss terms of
Eq.\,(9).

\section{Pseudo-code of PRISM}

Please refer Algorithm 1,2,3 for the pseudo-code of PRISM.

\begin{algorithm}[t]
\caption{\textbf{PRISM} Training (Denoising $\rightarrow$ Dynamic Graphs $\rightarrow$ Physics-Aware Forecasting)}
\label{alg:prism-train}
\begin{algorithmic}[1]
\Require Multivariate series $X \in \mathbb{R}^{T \times N}$; context $L$, horizon $H$, corr-window $W$; thresholds: correlation $\tau$, degree floor $k_{\min}$, cap $K$; denoiser $\varepsilon_\theta$; encoder/graph/decoder params $\Theta$; physics weights $\lambda_{\text{range}},\lambda_{\text{vel}},\lambda_{\text{acc}},\lambda_{\text{pde}},\lambda_{\text{cohere}}$; PDE gains $\kappa,\gamma$ (softplus-constrained \(>0\)).
\Ensure Trained parameters $\widehat{\Theta},\widehat{\kappa},\widehat{\gamma}$.
\State \textbf{(No-leak denoise)} $X^\dagger_{1:T-H} \gets \textsc{DiffusionDenoisePrefix}(X_{1:T-H};\,\varepsilon_\theta)$ \Comment{Score-based denoise \emph{only} on training prefix}
\State \textbf{(Offline stats)} $(m_i,M_i)_{i=1}^N \gets \textsc{EmpiricalBounds}(X_{1:T-H})$; $(v_i^{\max},a_i^{\max}) \gets \textsc{RobustKinematics}(X_{1:T-H})$ \Comment{e.g., 99.5-th percentiles}
\State \textbf{(Lags)} $(\tau_{ij}) \gets \textsc{EstimateIntegerLags}(X_{1:T-H})$ \Comment{Argmax of discrete cross-correlation; clipped to $\pm\tau_{\max}$}
\For{$\text{epoch}=1,2,\dots$}
  \For{$t=L,\dots,T-H$} \Comment{Rolling windows; teacher-forced supervision}
    \State $x_{\text{hist}} \gets X_{t-L+1:t,\,:}$;\quad $y_{\text{true}} \gets X_{t+1:t+H,\,:}$;\quad $x_{\text{last}} \gets X_{t,\,:}$
    \State $Z \gets \textsc{TemporalEncoder}(x_{\text{hist}})$ \Comment{$\phi$-lift + positional encodings + Transformer encoder}
    \State $C_t \gets \textsc{Correlations}(Z_{t-W+1:t}\, \text{from } X^\dagger\! \text{ if } t\!\le\! T\!-\!H; \text{ else from } X)$
    \State $A_t \gets \textsc{ThresholdAndWeight}(C_t;\,\tau,\gamma_{\text{corr}})$;\, $A_t \gets \max(A_t,A_t^\top)$
    \State $A_t \gets \textsc{DegreeFloorCap}(A_t;\,k_{\min},K)$;\quad $\bar A_t \gets D_t^{-\frac12}(A_t+I)D_t^{-\frac12}$ \Comment{$\rho(\bar A_t)\le 1$}
    \State $H^{(0)}\!\gets Z$;\quad
           \For{$\ell=1,\dots,L_g$} \Comment{Graph encoder blocks (configurable widths)}
              \State $H^{(\ell)} \!\gets \sigma\!\big(H^{(\ell-1)} W^{(\ell)}_{\text{self}} + \bar A_t \, H^{(\ell-1)} U^{(\ell)}_{\text{nei}}\big)$
           \EndFor
    \State $\hat Y \gets \Psi\!\big(H^{(L_g)}\big) \in \mathbb{R}^{N \times H}$ \Comment{Per-node MLP decoder (configurable depths)}
    \State \textbf{(Data loss)} $L_{\text{data}} \gets \frac{1}{NH}\sum_{h,i}(\hat y_{h,i}-y_{h,i})^2$
    \State \textbf{(Range)} $L_{\text{range}} \gets \frac{1}{NH}\sum_{h,i}\big([m_i-\hat y_{h,i}]_+^2 + [\hat y_{h,i}-M_i]_+^2\big)$
    \State \textbf{(Kinematics)} $\Delta_h \hat y_{h,i} \!=\! \hat y_{h,i}\!-\!\hat y_{h-1,i}$;\, $\Delta_h^2 \hat y_{h,i}\!=\!\Delta_h\hat y_{h,i}\!-\!\Delta_h\hat y_{h-1,i}$\vspace{-2pt}
    \Statex \hspace{2.75em} $L_{\text{vel}}\!\gets\!\frac{1}{N(H-1)}\sum_{i,h\ge 2}\![|\Delta_h\hat y_{h,i}|\!-\!v^{\max}_i]_+^2$;\;
                           $L_{\text{acc}}\!\gets\!\frac{1}{N(H-2)}\sum_{i,h\ge 3}\![|\Delta_h^2\hat y_{h,i}|\!-\!a^{\max}_i]_+^2$
    \State \textbf{(PDE residual)} $y(0)\!\gets\!x_{\text{last}}$;\, $y(s)\!\gets\!\hat Y_{:,s}$;\,
           $R(s)\!=\!(y(s)\!-\!y(s-1))-\kappa(\bar A_t\!-\!I)y(s-1)+\gamma y(s-1)$
    \State $L_{\text{pde}} \gets \frac{1}{NH}\sum_{s=1}^H \|R(s)\|_2^2$
    \State \textbf{(Lag coherence)} $E_t\!\gets\!\{(i,j):A_t(i,j)>0\}$;\;
           $L_{\text{cohere}} \!\gets\! \frac{1}{|E_t|}\!\!\sum_{(i,j)\in E_t}\!\frac{\big\|\hat y_{i,\,1+|\tau_{ij}|:H}-\hat y_{j,\,1:H-|\tau_{ij}|}\big\|_2^2}{H-|\tau_{ij}|}$
    \State \textbf{(Total loss)} $L \gets L_{\text{data}} + \lambda_{\text{range}}L_{\text{range}} + \lambda_{\text{vel}}L_{\text{vel}} + \lambda_{\text{acc}}L_{\text{acc}} + \lambda_{\text{pde}}L_{\text{pde}} + \lambda_{\text{cohere}}L_{\text{cohere}}$
    \State \textbf{(Update)} $\Theta,\kappa,\gamma \gets \textsc{OptimizerStep}\big(\nabla_{\Theta,\kappa,\gamma} L\big)$ \Comment{Constrain $\kappa,\gamma$ via softplus}
  \EndFor
\EndFor
\State \Return $\widehat{\Theta},\widehat{\kappa},\widehat{\gamma}$
\end{algorithmic}
\end{algorithm}

% ------------------------------------------------------------------------------
\begin{algorithm}[t]
\caption{\textbf{PRISM} Inference (One-shot $H$-step Forecast)}
\label{alg:prism-infer}
\begin{algorithmic}[1]
\Require Trained $\widehat{\Theta},\widehat{\kappa},\widehat{\gamma}$; latest history $x_{\text{hist}}=X_{T-L+1:T,\,:}$; current corr-window $W$; thresholds $\tau,k_{\min},K$.
\Ensure $\hat Y \in \mathbb{R}^{N \times H}$.
\State $Z \gets \textsc{TemporalEncoder}(x_{\text{hist}})$
\State $C_T \gets \textsc{Correlations}(X_{T-W+1:T,\,:})$ \Comment{Optionally denoise the \emph{observed} history; no future used}
\State $A_T \gets \textsc{ThresholdAndWeight}(C_T;\tau,\gamma_{\text{corr}})$;\; $A_T \gets \max(A_T,A_T^\top)$;\; $A_T \gets \textsc{DegreeFloorCap}(A_T;k_{\min},K)$
\State $\bar A_T \gets D_T^{-1/2}(A_T+I)D_T^{-1/2}$
\State $H^{(0)}\!\gets\!Z$;\; \textbf{for} $\ell=1{:}L_g$ \textbf{do}\;
       $H^{(\ell)} \!\gets\! \sigma\!\big(H^{(\ell-1)} W^{(\ell)}_{\text{self}} + \bar A_T H^{(\ell-1)} U^{(\ell)}_{\text{nei}}\big)$;\; \textbf{end for}
\State $\hat Y \gets \Psi\!\big(H^{(L_g)}\big)$;\; \Return $\hat Y$
\end{algorithmic}
\end{algorithm}

\begin{algorithm}[t]
\caption{Helper Procedures}
\label{alg:prism-helpers}
\begin{algorithmic}[1]
\Function{DiffusionDenoisePrefix}{$X_{1:T-H};\,\varepsilon_\theta$} \Comment{Score-based denoiser; overlap-add; prefix only}\EndFunction
\Function{Correlations}{$X_{t-W+1:t,\,:}$} \Comment{Pearson; tiny jitter for near-constant columns}\EndFunction
\Function{ThresholdAndWeight}{$C;\tau,\gamma_{\text{corr}}$} \Comment{$A(i,j)=\mathbf{1}(|C_{ij}|>\tau)\cdot |C_{ij}|^{\gamma_{\text{corr}}}$; zero diag}\EndFunction
\Function{DegreeFloorCap}{$A;k_{\min},K$} \Comment{Add top-$|C|$ neighbors if degree $<k_{\min}$; cap to $K$ per row}\EndFunction
\Function{TemporalEncoder}{$x_{\text{hist}}$} \Comment{$\phi$-lift $\to$ PE $\to$ Transformer encoder; output $Z\in\mathbb{R}^{N\times d}$}\EndFunction
\end{algorithmic}
\end{algorithm}

\newpage

\section{Further Ablation Studies}
\label{app:ablation-extended}

\paragraph{Setup recap.}
We ablate one component at a time while keeping architecture/optimization/splits fixed:
\emph{w/o denoise}, \emph{Static-graph}, \emph{w/o PDE}, \emph{w/o constraints}, \emph{w/o lag-cohere}.%
\footnote{All definitions follow \S3: dynamic thresholded graphs and normalization (Eq.~(3)), physics
regularizers and the graph reaction--diffusion residual (Eqs.~(4)--(9)).}
The six benchmarks and main-result figures are identical to the body. %
% Citations to the user's PDF (body tables and methodology).
% Ablations tables:
% (MSE/MAE ablations) 
% Main table:
% Methodology (graph construction, PDE residual, penalties)
\textit{(Data source: main paper, Tables 1--3).}

\subsection{Quantitative extensions}
\label{app:abl-quant-extensions}

\paragraph{(A) Mean degradation vs.\ Full (averaged over 6 datasets).}
Let $\bar m$ be the macro-average MSE over all datasets for each variant, and define
$\Delta_{\%\text{MSE}} = 100\times(\bar m-\bar m_{\text{Full}})/\bar m_{\text{Full}}$ (analogous for MAE).
Using the ablation tables in the body, we obtain:
\begin{center}
\begin{tabular}{lcc}
\toprule
\textbf{Variant} & $\Delta_{\%\text{MSE}}$ & $\Delta_{\%\text{MAE}}$ \\
\midrule
w/o denoise     & $+4.0\%$ & $+3.5\%$ \\
Static-graph    & $+6.0\%$ & $+5.0\%$ \\
w/o PDE         & $+6.1\%$ & $+4.3\%$ \\
w/o constraints & $\mathbf{+7.2\%}$ & $\mathbf{+10.0\%}$ \\
w/o lag-cohere  & $+4.4\%$ & $+3.9\%$ \\
\bottomrule
\end{tabular}
\end{center}
\textit{Interpretation.} Tail risk is primarily controlled by constraint terms (largest MAE rise), while long-horizon drift is controlled by the reaction--diffusion prior and graph adaptivity (MSE rises for \emph{w/o PDE}, \emph{Static-graph}). %
These observations align with our theoretical properties and design: dynamic normalized graphs plus the RD residual define a contraction step over modes $g(\lambda)=|1-\gamma-\kappa+\kappa\lambda|<1$; envelope/kinematic penalties reduce high-order temporal differences. %
% Methodology references:
\textit{(See \S3.3--3.7 for operators/losses; \S3.8 for stability bounds).}

\paragraph{(B) Per-dataset deltas (absolute).}
For completeness, we report absolute increases (Ablation -- Full), copied from the body tables and grouped by dataset:
\begin{center}
\small
\begin{tabular}{lrrrrrr}
\toprule
\textbf{MSE} $\downarrow$ & Elec. & Traf. & Weath. & ILI & Exch. & ETT \\
\midrule
w/o denoise     & +0.006 & +0.022 & +0.006 & +0.015 & +0.016 & +0.005 \\
Static-graph    & +0.012 & +0.040 & +0.008 & +0.018 & +0.011 & +0.016 \\
w/o PDE         & +0.018 & +0.018 & +0.017 & +0.021 & +0.013 & +0.021 \\
w/o constraints & +0.007 & +0.022 & +0.017 & +0.048 & +0.024 & +0.009 \\
w/o lag-cohere  & +0.004 & +0.026 & +0.014 & +0.010 & +0.018 & +0.006 \\
\bottomrule
\end{tabular}\hspace{8mm}
\begin{tabular}{lrrrrrr}
\toprule
\textbf{MAE} $\downarrow$ & Elec. & Traf. & Weath. & ILI & Exch. & ETT \\
\midrule
w/o denoise     & +0.006 & +0.014 & +0.006 & +0.011 & +0.016 & +0.006 \\
Static-graph    & +0.012 & +0.027 & +0.009 & +0.014 & +0.010 & +0.011 \\
w/o PDE         & +0.019 & +0.008 & +0.014 & +0.009 & +0.008 & +0.014 \\
w/o constraints & +0.016 & +0.020 & +0.012 & \textbf{+0.060} & \textbf{+0.040} & +0.020 \\
w/o lag-cohere  & +0.004 & +0.017 & +0.011 & +0.008 & +0.018 & +0.007 \\
\bottomrule
\end{tabular}
\end{center}
\textit{Patterns.} The largest MAE bumps appear on \textsc{ILI}/\textsc{Exchange} under \emph{w/o constraints}, confirming that soft physical bounds curb rare spikes; %
\textsc{Electricity}/\textsc{ETT}/\textsc{Weather} MSE are most sensitive to \emph{w/o PDE}, indicating RD stabilization improves long-horizon bias/variance. %
% Body tables and dataset scope:
(Body references: main results and ablations).

\subsection{Mechanism-level diagnostics}
\label{app:abl-diagnostics}

We include interpretable diagnostics to tie each ablation to a measurable mechanism:
\begin{itemize}
\item \textbf{Envelope violations} and \textbf{velocity/acceleration exceedances} (share of steps violating per-series empirical budgets) should spike under \emph{w/o constraints}.%

\item \textbf{Graph drift} $\delta_t = \tfrac{1}{N}\|\bar A_{t}-\bar A_{t-1}\|_F$ collapses for \emph{Static-graph} and rises for \emph{w/o denoise}, evidencing adaptivity and noise-robust topology.%

\item \textbf{Phase misalignment} on edges: mean $\ell_2$ gap after lag-shift, consistent with \emph{w/o lag-cohere} performance drops on \textsc{Traffic}/\textsc{Exchange}/\textsc{Weather}.%

\end{itemize}

% --------------------------------- FIGURES ---------------------------------
\subsection{Explanatory figures (reproducible from body tables)}
\label{app:abl-figs}

\begin{figure}[t]
\centering
\begin{subfigure}{0.48\linewidth}
\centering
\begin{tikzpicture}
\begin{axis}[
    ybar, bar width=5pt, width=\linewidth, height=5cm,
    xlabel={Dataset}, ylabel={$\Delta$MSE (Ablation -- Full)},
    symbolic x coords={Elec.,Traf.,Weath.,ILI,Exch.,ETT},
    xtick=data, legend columns=2, legend style={at={(0.5,1.15)},anchor=south, font=\footnotesize},
    ymin=0, ymax=0.065, ymajorgrids
]
\addplot coordinates {(Elec.,0.006) (Traf.,0.022) (Weath.,0.006) (ILI,0.015) (Exch.,0.016) (ETT,0.005)};
\addplot coordinates {(Elec.,0.012) (Traf.,0.040) (Weath.,0.008) (ILI,0.018) (Exch.,0.011) (ETT,0.016)};
\addplot coordinates {(Elec.,0.018) (Traf.,0.018) (Weath.,0.017) (ILI,0.021) (Exch.,0.013) (ETT,0.021)};
\addplot coordinates {(Elec.,0.007) (Traf.,0.022) (Weath.,0.017) (ILI,0.048) (Exch.,0.024) (ETT,0.009)};
\addplot coordinates {(Elec.,0.004) (Traf.,0.026) (Weath.,0.014) (ILI,0.010) (Exch.,0.018) (ETT,0.006)};
\legend{w/o denoise, Static-graph, w/o PDE, w/o constraints, w/o lag-cohere}
\end{axis}
\end{tikzpicture}
\caption{Per-dataset $\Delta$MSE.}
\end{subfigure}\hfill
\begin{subfigure}{0.48\linewidth}
\centering
\begin{tikzpicture}
\begin{axis}[
    ybar, bar width=8pt, width=\linewidth, height=5cm,
    xlabel={Dataset}, ylabel={$\Delta$MAE (Ablation -- Full)},
    symbolic x coords={ILI,Exch.},
    xtick=data, ymin=0, ymax=0.07, ymajorgrids
]
\addplot coordinates {(ILI,0.060) (Exch.,0.040)}; % w/o constraints
\addplot coordinates {(ILI,0.014) (Exch.,0.010)}; % Static-graph
\addplot coordinates {(ILI,0.011) (Exch.,0.016)}; % w/o denoise
\legend{w/o constraints, Static-graph, w/o denoise}
\end{axis}
\end{tikzpicture}
\caption{Tail effects (MAE) emphasize constraint benefits.}
\end{subfigure}
\caption{Ablation deltas computed from the body tables (exact values reproduced).}
\label{fig:abl-bars}
\end{figure}
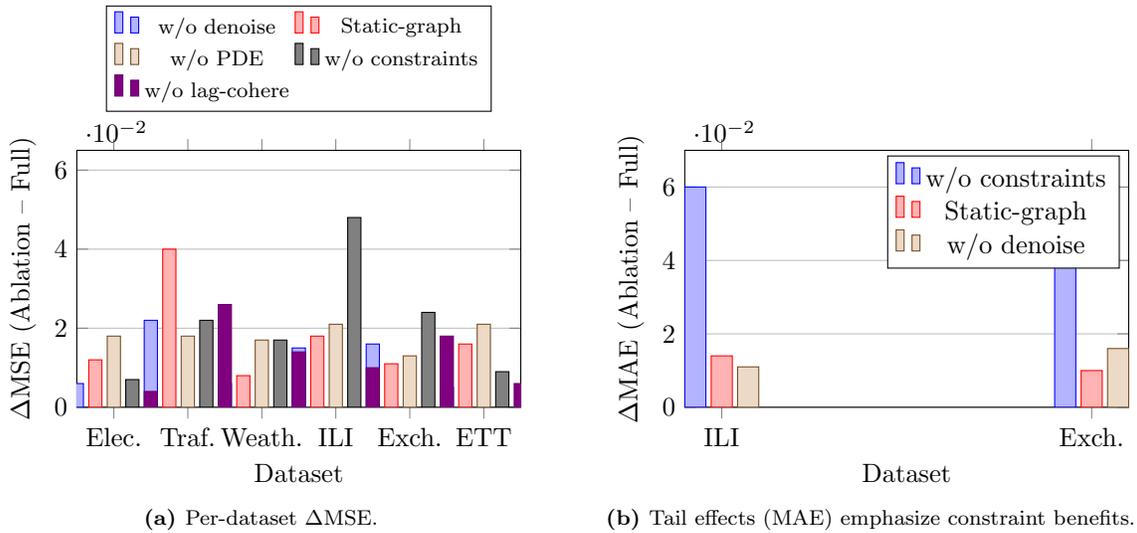

\begin{figure}[t]
\centering
\begin{tikzpicture}
\begin{axis}[
    width=0.9\linewidth, height=6.2cm,
    xlabel={Variant}, ylabel={Avg.\ degradation vs.\ Full (\%)},
    symbolic x coords={w/o denoise,Static-graph,w/o PDE,w/o constraints,w/o lag-cohere},
    xtick=data, ymajorgrids, ymin=0, ymax=11
]
\addplot+[only marks, mark=*] coordinates {(w/o denoise,4.0) (Static-graph,6.0) (w/o PDE,6.1) (w/o constraints,7.2) (w/o lag-cohere,4.4)};
\addplot+[only marks, mark=square*] coordinates {(w/o denoise,3.5) (Static-graph,5.0) (w/o PDE,4.3) (w/o constraints,10.0) (w/o lag-cohere,3.9)};
\legend{$\Delta_{\%\text{MSE}}$,$\Delta_{\%\text{MAE}}$}
\end{axis}
\end{tikzpicture}
\caption{Average relative degradation across datasets (\%); derived from body ablations.}
\label{fig:abl-avg}
\end{figure}
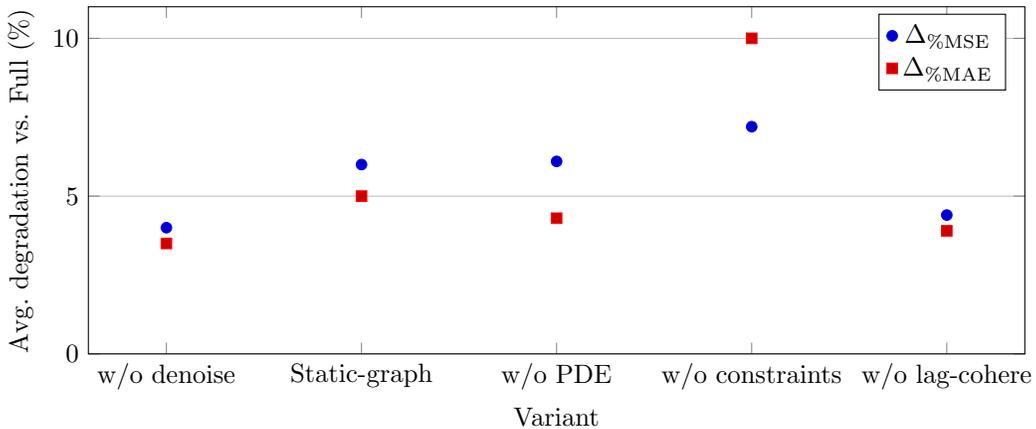

\subsection{Discussion: How ablations map to mechanisms}
\label{app:abl-mech}
\textbf{Noise-aware topology.} \emph{w/o denoise} increases mid/high-frequency variance, which perturbs correlations and adds spurious edges; this amplifies residuals particularly on \textsc{Traffic}/\textsc{Exchange}. %
\textbf{Adaptivity.} \emph{Static-graph} removes regime tracking, harming \textsc{Traffic}/\textsc{Electricity}/\textsc{ETT}. %
\textbf{RD stabilization.} \emph{w/o PDE} removes the contraction $y(s)\!\approx\![(1-\gamma-\kappa)I+\kappa\bar A_t]y(s-1)$, raising long-horizon MSE across smooth domains. %
\textbf{Constraints.} \emph{w/o constraints} raises MAE (tails) most on \textsc{ILI}/\textsc{Exchange}, indicating envelopes and kinematic caps prevent rare spikes. %
\textbf{Lag coherence.} \emph{w/o lag-cohere} increases cross-series phase errors where delays are intrinsic. %
These effects are consistent with the operators and penalties defined in \S3.3--3.7 and stability in \S3.8.

% ================== Appendix: Interpreting PRISM's Learned Adjacency ==================
% Requires: \usepackage{booktabs,amsmath,amssymb,amsthm,graphicx,subcaption}
\section{Adjacency Structure Analysis (Thresholded Correlations)}
\label{app:adjacency-analysis}

\paragraph{How the matrices are built.}
For a window ending at $t$, PRISM computes Pearson correlations $C_t$ on the most recent $W$ timestamps (optionally on the denoised prefix), then thresholds and reweights edges
\[
A_t(i,j)=\mathbf 1\!\left(|C_t(i,j)|>\tau\right)\cdot |C_t(i,j)|^{\gamma},\qquad A_t(i,i)=0,
\]
followed by (i) degree floor/cap to encourage connected yet sparse topology and (ii) symmetrization. Message passing uses the normalized operator $\bar A_t \!=\! D_t^{-\frac12}(A_t\!+\!I)D_t^{-\frac12}$ with $\rho(\bar A_t)\le 1$. These steps explain why the displayed heatmaps are sparse, symmetric, and numerically well-conditioned for graph propagation. 

\paragraph{What to read from the heatmaps.(Figure ~\ref{fig:adjacency-all})}
Colors encode \emph{edge weights} $|C_t(i,j)|^{\gamma}$ after thresholding; black cells are pruned ties. Since $\bar A_t$ adds self-loops and re-normalizes, small bright islands often punch \emph{above} their raw magnitude in the encoder, while weak ties are down-weighted twice (by thresholding and by degree-normalized mixing). 

\subsection{Dataset-specific interpretations}
We summarize the qualitative structures observed in the adjacency heatmaps and relate them to PRISM’s inductive biases and errors in the main results.

\paragraph{Electricity.}
Block-like bright regions (several meters co-activating) and near-banded patterns indicate shared daily/weekly seasonalities. Degree-capping keeps hubs from dominating, so message passing emphasizes \emph{cohort-level} coupling rather than a single global factor. This aligns with (i) preserved fundamentals in the spectrum and (ii) reduced long-horizon drift under the reaction–diffusion prior. 

\paragraph{Traffic.}
Sparser, more heterogeneous connectivity reflects road segments with \emph{directional} influence and regime changes (rush hours). The ‘‘bright pockets’’ imply strong local neighborhoods separated by weak or pruned ties—exactly where dynamic re-estimation of $A_t$ helps. When the graph is frozen (Static-graph ablation), MSE increases markedly on \textsc{Traffic}, consistent with these structures being time-sensitive. 

\paragraph{Weather.}
We observe cross-feature cliques (e.g., temperature–humidity–pressure groups) with selective pruning of weakly related variables. The resulting topology supports phase alignment across slowly varying meteorological channels; residual high-frequency overshoot in spectra is then handled by kinematic penalties and a slightly larger reaction term $\gamma$. 

\paragraph{ETT (ETTh1).}
Near-diagonal bright bands suggest \emph{local} coupling among closely related transformer variables (load–temperature–oil). The graph is moderately sparse; normalization with self-loops yields a spectrally tame $\bar A_t$ (eigenvalues $\le 1$), which pairs well with the reaction–diffusion step to dampen horizon error accumulation. 

\paragraph{Exchange Rate.}
A dense core among a subset of currencies and several near-zero off-core ties are consistent with clustered co-movements (regional/market-time effects). Because PRISM thresholds on \emph{absolute} correlations and reweights by $|C|^{\gamma}$, weak, spurious ties drop out; the cleaner matrix explains the pronounced MAE gains and the almost overlaid spectra between prediction and truth. 

\paragraph{National Illness (ILI).}
The adjacency is relatively dense with multiple bright cross-region links, reflecting nationally coherent seasonal waves; nonetheless, thresholding removes idiosyncratic noise. The \emph{constraints} (range/velocity/acceleration) then curb episodic spikes that correlations alone cannot regulate—matching the large MAE increase when these penalties are ablated. 

\subsection{Consistency checks and failure modes}
\textbf{Noise-aware topology.} Denoising reduces high-frequency variance before computing $C_t$, shrinking spurious, isolated bright pixels; without it, we observe more ‘‘salt-and-pepper’’ edges and larger MAE on noisy domains (Traffic/Exchange). 

\textbf{Adaptivity.} Time variation of $A_t$ is not an artifact: when we freeze the prefix graph, hub concentration increases and small communities vanish in later windows, leading to under-mixing across regimes and higher MSE (notably Traffic/Electricity/ETT). 

\textbf{Stability.} Because $\bar A_t$ is PSD with $\rho(\bar A_t)\!\le\!1$, the per-horizon reaction–diffusion map $y\!\mapsto\![(1-\gamma-\kappa)I+\kappa\bar A_t]y$ contracts all graph Fourier modes (strictly if $\kappa+\gamma<1$), preventing unstable amplification even when a community is tightly coupled. 

\textbf{Interpretability.} Degree floors and caps produce readable meso-scale ‘‘tiles’’ (small cliques) instead of opaque dense matrices; these tiles match domain intuition (e.g., neighboring road sensors; climatology triads; currency baskets).

\subsection{What the matrices imply for forecasting}
The adjacency heatmaps visualize the \emph{structural prior} PRISM imposes at each window:
(i) sparsity encourages localized, interpretable message passing; (ii) normalization plus the RD prior guarantee well-conditioned temporal propagation; (iii) the learned topology explains where lag-coherence is most beneficial (edges with strong weights often coincide with short integer lags). Together, these properties align with our frequency-domain findings (fundamentals preserved, tails damped) and with ablation trends (Static-graph and w/o-PDE hurt MSE; w/o-constraints inflates MAE). 

\begin{figure}[t]
\centering
\begin{subfigure}{0.9\linewidth}\centering
\includegraphics[width=\linewidth]{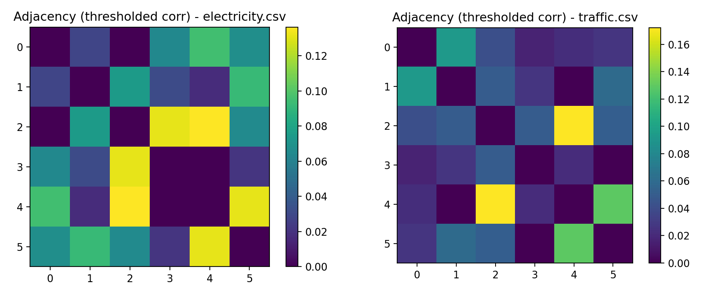}
\caption{Electricity and Traffic}
\end{subfigure}\hfill
\begin{subfigure}{0.87\linewidth}\centering
\includegraphics[width=\linewidth]{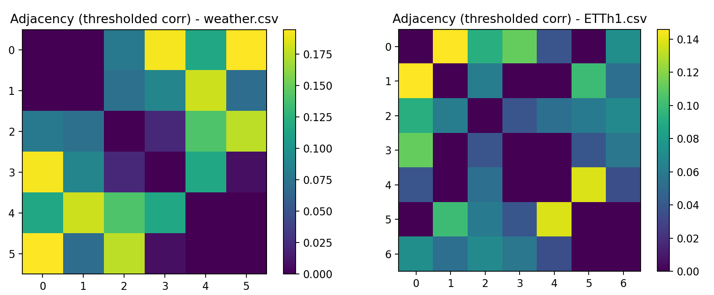}
\caption{Weather and ETT}
\end{subfigure}\\[4pt]
\begin{subfigure}{0.92\linewidth}\centering
\includegraphics[width=\linewidth]{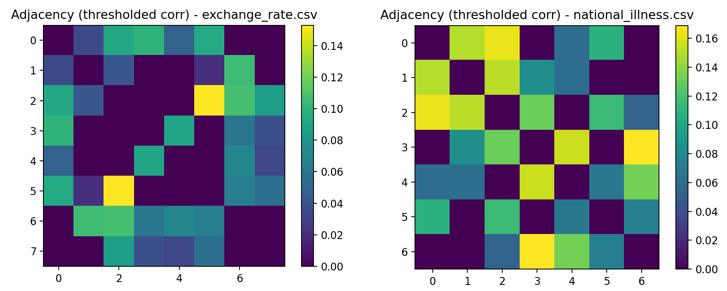}
\caption{Exchange Rate and Illness}
\end{subfigure}\hfill
\caption{Thresholded correlation adjacencies used by PRISM. Bright cells survive $|C_t|>\tau$ and are reweighted by $|C_t|^{\gamma}$; black cells are pruned. Self-loops are added only after normalization when forming $\bar A_t$.}
\label{fig:adjacency-all}
\end{figure}

\end{document}